\title[Practical Algorithms for Orientations of Partially Directed Graphical
Models]{Practical Algorithms for Orientations of\\ Partially Directed Graphical
Models}
\newcommand{\cmark}{\ding{51}}%
\newcommand{\xmark}{\ding{55}}%
\tikzset{
  pico/.style = {
    every node/.style = {
      draw,
      circle,
      semithick,
      inner sep = 0pt,
      minimum width = 0.7ex,
      fill = white
    },
    semithick
  },
    edge/.style = {
    semithick
  },
  arc/.style = {
    ->,
    semithick,
    >={[round,sep]Stealth}
  }
}
\newtheorem{thm}{Theorem}
\newtheorem{defi}{Definition}
\newtheorem{problem}{Problem}
\newtheorem{fact}{Fact}
\newcommand{\Pa}{\mathrm{Pa}} 
\newcommand{\Ch}{\mathrm{Ch}} 
\newcommand{\Ne}{\mathrm{Ne}} 
\newcommand{\Si}{\mathrm{Si}} 
\newcommand{\isadj}{\overset{?}{\sim}} % make this operator? 
\renewcommand\rightarrow[1][1.4em]{\tikz[baseline=-0.5ex, shorten
  <=2pt, shorten >=2pt] \draw[-{Stealth[round,sep]}] (0,0) -- (#1,0);}
\renewcommand\leftarrow[1][1.4em]{\tikz[baseline=-0.5ex, shorten
  <=2pt, shorten >=2pt] \draw[{Stealth[round,sep]}-] (0,0) -- (#1,0);}
\newenvironment{computationalproblem}%
{%
  \leavevmode\nobreak\par
  \begin{list}%
    {}%
    {%
      \def\labelstyle{\itshape}
      \setlength{\topsep}{0pt}%
      \settowidth{\labelwidth}{\labelstyle Output:}%
      \setlength{\leftmargin}{\labelwidth}%
      \addtolength{\leftmargin}{\labelsep}%
      \setlength{\itemsep}{0pt}%
      \setlength{\parsep}{0pt}%
    }%
      \def\input{\item[\labelstyle Input:]}%
      \def\output{\item[\labelstyle Output:]}%
    }%
    {%
  \end{list}%
}
\begin{document}

\maketitle

\begin{abstract}%
% TODO: observational x2
In observational studies, the true causal model is typically unknown 
and needs to be estimated from available observational and limited experimental data.
In such cases, the learned causal model is commonly represented as a partially 
directed acyclic graph (PDAG), which contains both directed and undirected 
edges indicating uncertainty of causal relations between random variables. 
The main focus of this paper is on the maximal orientation task, which, 
for a given PDAG, aims to orient the undirected edges maximally such that 
the resulting graph represents the same Markov equivalent DAGs as the 
input PDAG. This task is a subroutine used frequently in causal discovery,
e.\,g., as the final step of the celebrated PC algorithm.
Utilizing connections to the problem of finding a consistent DAG extension of a PDAG,
we derive faster algorithms for computing the maximal
orientation by proposing two novel approaches 
for extending PDAGs, both constructed with an emphasis on simplicity and practical effectiveness.
\end{abstract}

\begin{keywords}%
 Causal graphical models,  Directed acyclic graphs, Markov equivalence, Consistent extension, Maximal orientation, Meek rules 
\end{keywords}

\def\thefootnote{$^\ast$}
\def\footnoteseptext{}
\footnotetext{Equal contribution.}
\def\thefootnote{\arabic{footnote}}
\def\footnoteseptext{. }

\section{Introduction} \label{sec:intro}
The development of probabilistic graphical models enables a mathematically sound 
language to handle uncertainty in a coherent and compact way~\citep{Spirtes2000,pearl2009,Koller2009,Elwert2013}.
They also provide scientists with intuitive tools for causal analysis and currently 
receive substantial attention in epidemiology, sociology and other disciplines. 
Moreover, the graphical modeling approach allows for the use of computational 
methods that have enabled significant progress towards 
automated causal inference and causal structure discovery.

Certainly, one of the most prominent graphical models is the 
\emph{directed acyclic graph} (DAG), whose edges
encode direct causal influences between the random variables of interest.
In practice, however, the underlying true DAGs are unknown 
and from observational or limited experimental data they can only be inferred
to a certain degree of uncertainty. In such cases, the learned causal model is 
given, typically, as a \emph{partially directed acyclic graph} (PDAG)
which contains both directed and undirected edges. Such a PDAG represents 
a class of \emph{Markov equivalent} DAGs encoding the same statistical
properties and its undirected edges indicate which directed edges 
may vary across DAGs of the class~\citep{verma1990,Meek1995,Andersson1997}. 

In this paper, we investigate \emph{orientations} of PDAG models -- 
primitive tasks used to solve more complex problems of causal analysis.
Our goal is to provide simple and effective algorithms that improve performance 
of downstream tasks, e.g., in causal structure 
learning, active learning or causal effect estimation.
The main focus of our study is on the \emph{maximal orientation} problem, the
goal of which is to orient a
maximal number of undirected edges in a PDAG $G$ such that the resulting graph, called
\emph{maximally oriented PDAG} (MPDAG), represents the same DAGs as $G$. 
It is well known that the MPDAG can be obtained from the PDAG by closing it under the
celebrated Meek rules \citep{Meek1995}. For an example PDAG $G$ and its MPDAG
$M$, see Fig.~\ref{fig:ce:mo:new}.

The primary significance of the MPDAG model, including \emph{completed PDAGs} (CPDAGs,
also called \emph{essential graphs})~\citep{Andersson1997}
is that it represents Markov equivalent DAGs 
in an elegant and convenient way. It is commonly used for solving many important 
problems as, e.g., counting and sampling Markov equivalent DAGs~\citep{He2015,wienobst2021counting},
estimating causal effects~\citep{maathuis2009estimating,van2016separators,perkovic2017complete}, 
or learning causal models~\citep{chickering2002optimal},
where CPDAGs represent the states of the search space.
Consequently, orienting a given PDAG maximally, is a frequently used primitive
in causal inference and discovery, perhaps
most prominently arising in constraint-based causal structure learning, as,
e.\,g., the
final step in the PC algorithm~\citep{Spirtes2000,kalisch2007estimating} and its
  modifications. 
%or the IC algorithm~\citep{pearl1991theory}.
Similarly, several score-based algorithms, e.g.,~\citep{pellet2008using} based on 
a generic feature-selection approach, rely on this task as well.
Another important example are algorithms for active learning which, beyond observational,
use experimental (interventional) data to resolve orientation ambiguities.
In searching for an optimal strategy, typical algorithms iteratively construct
\emph{interventional essential graphs}, which are obtained by orienting subsequent 
PDAGs
maximally~\citep{He2008,hauser2012characterization,shanmugam2015learning,activelearningdct2020}.
Furthermore, methods enumerating all possible total effects or estimating bounds on the effects 
in  a Markov equivalence class need a subroutine to compute the maximal 
orientations~\citep{maathuis2009estimating,guo2021minimal}. Often, the maximal orientation task is performed not only once,
but repeatedly, necessitating
efficient algorithms for this task.

While the algorithms for computing maximal orientations 
used in practice resort to directly applying the Meek rules, the best theoretical methods are based 
on two other important primitives of causal analysis: (i)~the \emph{consistent extension} of a PDAG to a DAG and  
(ii)~computing the CPDAG representation of the graphs Markov equivalent to a given DAG. This was already 
mentioned by~\cite{Chickering1995} and generalized to instances with background knowledge 
by~\cite{WBL2021}. Using the clever algorithm of~\cite{Chickering1995}, the
second task (ii) can be solved 
in linear time. On the other hand, the computational complexity of the first problem, which is to 
orient all undirected edges such that no new v-structures arises, is significantly
higher. Hence, developing effective practical methods for finding consistent
extensions of PDAGs can be a key building block in the efficient computation of maximal orientations. 
%for an example of PDAG extension, see Fig.~\ref{fig:ce:mo:new}). 
%In this paper, we analyze the performance of the state-of-the-art algorithms for consistent 
%PDAG extension by \citet{DorTarsi1992} and \citep{WBL2021} and propose 
%two novel approaches to extend PDAGs with a focus on simplicity and effectiveness.

\begin{figure}
  \centering
  \begin{tikzpicture}[yscale=0.75]
    \node (desc) at (0,.9) {$G$};
    \node (a) at (0,0) {$a$};
    \node (b) at (1,1) {$b$};
    \node (c) at (1,-1) {$c$};
    \node (d) at (2,0) {$d$};
    \node (e) at (3,0) {$e$};

    \draw [-] (a) -- (b);
    \draw [-] (a) -- (c);
    \draw [arc] (b) -- (d);
    \draw [arc] (c) -- (d);
    \draw [-] (d) -- (e);
    \draw [-] (a) -- (d);
  \end{tikzpicture}
  \hspace*{7mm}
  \begin{tikzpicture}[yscale=0.75]
    \node (desc) at (0,.9) {$M$};
    \node (a) at (0,0) {$a$};
    \node (b) at (1,1) {$b$};
    \node (c) at (1,-1) {$c$};
    \node (d) at (2,0) {$d$};
    \node (e) at (3,0) {$e$};

    \draw [-] (a) -- (b);
    \draw [-] (a) -- (c);
    \draw [arc] (b) -- (d);
    \draw [arc] (c) -- (d);
    \draw [arc] (d) -- (e);
    \draw [arc] (a) -- (d);
  \end{tikzpicture}
   \hspace*{7mm}
   \begin{tikzpicture}[yscale=0.75]
   \node (desc) at (0,.9) {$D$};
    \node (a) at (0,0) {$a$};
    \node (b) at (1,1) {$b$};
    \node (c) at (1,-1) {$c$};
    \node (d) at (2,0) {$d$};
    \node (e) at (3,0) {$e$};

    \draw [arc] (a) -- (b);
    \draw [arc] (a) -- (c);
    \draw [arc] (b) -- (d);
    \draw [arc] (c) -- (d);
    \draw [arc] (d) -- (e);
    \draw [arc] (a) -- (d);
  \end{tikzpicture}

 \caption{A PDAG $G$ with a maximal orientation $M$. DAG $D$ shows a consistent extension of $G$.}
  \label{fig:ce:mo:new}
\end{figure}
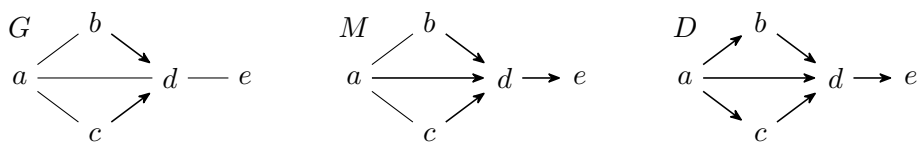

\paragraph*{Previous Work.}
The study on extendability of PDAGs has been initiated by
\cite{verma1992algorithm}, who provided a method
to find a consistent DAG  extension in time $O(n^4 m)$, where $n$ denotes the number of nodes and 
$m$  the number of edges. At the same time, \cite{DorTarsi1992} proposed a  faster method of time 
complexity $O(n^4)$ which is conceptually simple, easy to implement, and widely used in practice so far.
In \citeyear{WBL2021}, \citeauthor{WBL2021} proposed a new algorithm for the problem
which runs in time $O(n^3)$.
Simultaneously, the authors showed that, under a computational intractability 
assumption, the  cubic algorithm is optimal.

Despite its importance, relatively little attention has been paid to the algorithmic 
aspects of the maximal orientation problem so far. A commonly used approach
relies on the direct, iterative  application of the four Meek rules  
%(which are shown in Fig.~\ref{figure:meek:rules})
until none can be applied anymore.
This approach is, however, computationally expensive, and its worst-case run time
is $O(n^4 m)$, where $n$ and $m$ denote the number of vertices and edges of the
graph. In \citeyear{Chickering1995}, \citeauthor{Chickering1995}
proposed a sophisticated algorithm, avoiding the direct application of the Meek rules, which firstly 
extends a PDAG  to a DAG and next computes the corresponding CPDAG directly 
from the DAG. Since a DAG can be transformed to the corresponding CPDAG in linear time,
the total running time of the algorithm is dominated by the time
needed for extension. This approach is used in the GES
algorithm~\citep{chickering2002optimal} and in a modified way in the GIES
algorithm~\citep{hauser2012characterization}, the latter however
having
the same worst-case complexity as the direct application of the Meek
rules discussed above.
More recently, \citet{WBL2021} generalized both approaches, 
while maintaining the optimal time complexity of $O(n^3)$, to maximally orient a PDAG to an MPDAG.

Finally, in more structured cases, the maximal orientation task
may be performed in linear-time $O(n+m)$. This occurs in the setting
of active
learning using \emph{single-target} interventions (that is, only single
variables can be manipulated at a time) and is shown
in~\cite{wienobst2022applications} (Theorem~5).  

\paragraph*{Our contributions.}
In this work, we focus on the general problem of maximally
orienting a PDAG. The contributions of this paper are twofold: On the one hand, we give a
practical evaluation of methods for the consistent extension problem,
contributing two novel algorithms for this task, which are simple yet
effective and build upon the algorithm by~\cite{DorTarsi1992}. On the other hand, we aim to illustrate the
strengths of using consistent DAG extensions in the computation of the maximal
orientation of a PDAG. While this approach has been proposed theoretically
in~\citep{Chickering1995,WBL2021}, it
has only been used in special cases and has not found widespread
application in practice, evidenced by the fact that most
software packages such as \texttt{pcalg}~\citep{kalisch2012causal} and
\texttt{causaldag}~\citep{squires2018causaldag} rely on algorithms, which apply
the Meek rules directly. Utilizing consistent DAG extensions instead, yields a better
worst-case complexity and is superior in practice as we demonstrate in our
experiments.

\section{Preliminaries} \label{sec:prelim}
We consider partially directed graphs $G = (V_G,  A_G , E_G)$, in which the pairs
of vertices $x,y\in V$ are connected by 
directed edges $u \rightarrow v$ (also called arcs, given in $A_G$) or 
undirected edges $u-v$ (given in $E_G$).
We restrict
ourselves to graphs, where at most one edge exists between any pair $x,y \in V_G$
and if there is such an edge, we call $x$ and $y$ adjacent, denoted by $x \sim_G
y$. The arcs $u\rightarrow v$ and $u \leftarrow
v$ have different \emph{orientations} and \emph{orienting} an undirected edge $u
- v$ means replacing it with an arc.
Vertex $x$ is called a parent of $y$ if $x \rightarrow y \in G$, a child of $y$
if $x \leftarrow y \in G$ and a sibling of $y$ if $x - y \in G$. The sets of
parents, children and siblings of vertex $v$ are denoted by $\Pa_G(v)$, $\Ch_G(v)$,
$\Si_G(v)$. By $\Ne_G(v) = \Pa_G(v) \cup  \Ch_G(v) \cup \Si_G(v)$ we denote the set of
neighbors of $v$ and the {degree} of $v$ counts the number of its neighbors
$|\Ne_G(v)|$. Edges, which have $v$ as endpoint, are
called \emph{incident} to $v$. For a set $S \subseteq V_G$, the induced subgraph
$G[S]$ contains all edges from $G$ with both endpoints
in $S$.
We skip the subscript if $G$ is clear from the context.

A partially directed graph is called acyclic (or partially directed acyclic
graph, PDAG for short) if it does not contain a directed cycle, that is, a
sequence of distinct vertices $(c_1, c_2, \dots, c_k)$, with $k \geq 3$, and
edges $c_i \rightarrow c_{i+1} $ for $i \in \{1,\dots,k-1\}$, and $c_k
\rightarrow c_{1}$. A PDAG without any undirected edges ($E_G = \emptyset$) is
called a directed acyclic graph (DAG). There is a linear ordering (called
\emph{topological ordering}) of the vertices of every DAG
such that $u \rightarrow v$ if $u$ comes before $v$ in the ordering.

\begin{defi}
  A DAG $D$ is a \emph{consistent extension} of a PDAG $G$ if 
  \begin{enumerate}
    \item $D$ and $G$ have the same vertex set and $\Ne_G(v) = \Ne_D(v)$ for all
      vertices $v$,
    \item every directed edge in $G$ is also in $D$, i.\,e., $A_G \subseteq A_D$,
    \item for all edges $u - v$ in $E_G$, there is $u\rightarrow v$ in $A_D$ or
      $v\leftarrow u$ in $A_D$, and
    \item for all $u,v,w \in V$, the induced subgraph $u \rightarrow v
      \leftarrow w$ ($u \not\sim w$) is in $G$ iff it is in $D$. 
  \end{enumerate}
\end{defi}

The set of consistent extensions of a PDAG $G$ is denoted by $[G]$. We
remark that an induced subgraph of the form $u \rightarrow v \leftarrow w$ is
also called a v-structure. Hence, every $D\in[G]$ has the same v-structures as $G$.
If $[G] \neq \emptyset$, we call $G$ \emph{extendable}.
Some undirected edges in a PDAG $G$ might be oriented the same way in all of its
consistent extensions. The graph where these undirected edges are replaced by
the corresponding invariant arcs is called the \emph{maximal orientation} of
$G$, denoted by $\mathrm{MPDAG}(G)$, which stands for \emph{maximally-oriented
PDAG}. It is a well-known fact that $\mathrm{MPDAG}(G)$ can be computed by
repeatedly applying the four Meek rules~\citep{Meek1995}, which are shown in
Fig.~\ref{figure:meek:rules}, to $G$ until none applies anymore.
\begin{figure}
  \centering
  \begin{tikzpicture}[scale=0.9]
    \node (a) at (-1,0) {$a$};
    \node (b) at (-1,-1) {$b$};
    \node (c) at (0,-1) {$c$}; 
    \draw [arc](a) -- (b);
    \draw [-](b) -- (c);
    
    \node at (0.5,-0.5) {$\Rightarrow$};
    \node at (0.5, 0.25) {\textbf{R1}};
    
    \node (a) at (1,0) {$a$};
    \node (b) at (1,-1) {$b$};
    \node (c) at (2,-1) {$c$}; 
    \draw [arc](a) -- (b);
    \draw [arc](b) -- (c);

    \node (a) at (3,0) {$a$};
    \node (b) at (3,-1) {$b$};
    \node (c) at (4,-1) {$c$}; 
    \draw [arc](a) -- (b);
    \draw [arc](b) -- (c);
    \draw [-] (a) edge (c);
    
    \node at (4.5,-0.5) {$\Rightarrow$};
    \node at (4.5, 0.25) {\textbf{R2}};
    
    \node (a) at (5,0) {$a$};
    \node (b) at (5,-1) {$b$};
    \node (c) at (6,-1) {$c$};
    \draw [arc](a) -- (b);
    \draw [arc](b) -- (c);
    \draw [arc] (a) edge (c);
    
    \node (a) at (-1+8,-2+2) {$a$};
    \node (d) at (0+8,-2+2) {$d$};
    \node (b) at (-1+8,-3+2) {$b$};
    \node (c) at (0+8,-3+2) {$c$}; 
    \draw [-](a) -- (b);
    \draw [-](a) -- (d);
    \draw [-](a) -- (c);
    \draw [arc](b) -- (c);
    \draw [arc](d) -- (c);
    
    \node at (0.5+8,-2.5+2) {$\Rightarrow$};
    \node at (0.5+8, -1.75+2) {\textbf{R3}};
    
    \node (a) at (1+8,-2+2) {$a$};
    \node (d) at (2+8,-2+2) {$d$};
    \node (b) at (1+8,-3+2) {$b$};
    \node (c) at (2+8,-3+2) {$c$}; 
    \draw [-](a) -- (b);
    \draw [-](a) -- (d);
    \draw [arc](a) -- (c);
    \draw [arc](b) -- (c);
    \draw [arc](d) -- (c);      
    
    \node (a) at (3+8, -2+2) {$a$};
    \node (d) at (4+8, -2+2) {$d$};
    \node (b) at (3+8, -3+2) {$b$};
    \node (c) at (4+8, -3+2) {$c$};
    \draw [-] (a) -- (b);
    \draw [-] (a) -- (c);
    \draw [-] (a) -- (d);
    \draw [arc] (d) -- (c);
    \draw [arc] (c) -- (b);
    
    \node at (4.5+8, -2.5+2) {$\Rightarrow$};
    \node at (4.5+8, -1.75+2) {\textbf{R4}};
    
    \node (a) at (5+8, -2+2) {$a$};
    \node (d) at (6+8, -2+2) {$d$};
    \node (b) at (5+8, -3+2) {$b$};
    \node (c) at (6+8, -3+2) {$c$};
    \draw [arc] (a) -- (b);
    \draw [-] (a) -- (c);
    \draw [-] (a) -- (d);
    \draw [arc] (d) -- (c);
    \draw [arc] (c) -- (b);      
  \end{tikzpicture}
  \caption{The four Meek rules that are used to characterize MPDAGs~\citep{Meek1995}.}
  \label{figure:meek:rules}
\end{figure}
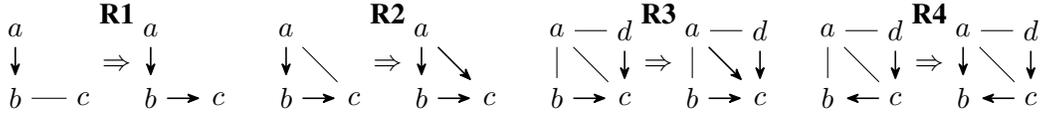
In this work, we consider the following two fundamental computational
problems for PDAGs:  

\vspace*{0.4cm}
\begin{minipage}{0.45\textwidth}
\begin{problem}{\textsc{ext}}
  \begin{computationalproblem}
    \input PDAG $G$.
    \output $D \in [G]$ or $\bot$ if $[G] = \emptyset$.
  \end{computationalproblem}
\end{problem}
\end{minipage}
\begin{minipage}{0.45\textwidth}
\begin{problem}{\textsc{max-orient}}
  \begin{computationalproblem}
    \input PDAG $G$.
    \output $\mathrm{MPDAG}(G)$.
  \end{computationalproblem}
\end{problem}
\end{minipage}
\vspace*{0.4cm}

For \textsc{max-orient}, we assume that $G$ is extendable, else $G$ has no
causal interpretation and the definition of $\mathrm{MPDAG}(G)$ is
without meaning. In this case, \textsc{ext} can be utilized for efficiently
solving \textsc{max-orient} as mentioned above (this is discussed in more
detail in Section~\ref{sec:applmeek}).
For $\textsc{ext}$, the notion of a \emph{potential-sink} is of central
importance. It asserts that all incident edges can be oriented towards a vertex
$v$ without introducing a new v-structure or changing the orientation of an arc.
Formally, vertex $v$ in $G$ is called a potential-sink if (i) there is no arc
$v \rightarrow x$ directed outward from $v$ (i.\,e., $\Ch(v) = \emptyset$) and
(ii) for every sibling $y \in \Si(v)$, vertex $y$ is adjacent to all other
neighbors of $v$. For example, vertex $e$ is the only potential-sink
in the graph $G$ shown in Fig.~\ref{fig:ce:mo:new}. 

The following fact states that a consistent extension of a graph $G$ can be
obtained by iteratively removing potential-sinks from $G$. This strategy is
known as the Dor-Tarsi algorithm.
\begin{fact}[\citealp{DorTarsi1992}] \label{fact:dt}
  Let $G_0$ be an extendable PDAG and let
  $G_i$, $1 \leq i \leq n$, be obtained by removing potential-sink $v_i$ and
  its incident edges from $G_{i-1}$. It holds that
  $(v_n, v_{n-1}, \dots, v_1)$ is a topological ordering of a
  consistent extension of $G_0$.  
\end{fact}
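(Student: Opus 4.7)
The plan is a descending induction on $i$, from $i=n$ down to $i=0$, establishing at each step that $G_i$ has a consistent extension $D_i$ whose topological ordering is $(v_n,\dots,v_{i+1})$. The base case $i=n$ is immediate because $G_n$ has no vertices. Setting $i=0$ at the end produces the desired consistent extension of $G_0$ with topological ordering $(v_n,\dots,v_1)$.

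In the inductive step, I would take the extension $D_i$ of $G_i$ and construct $D_{i-1}$ by reinserting $v_i$ together with an arc $u\rightarrow v_i$ for every $u\in \Ne_{G_{i-1}}(v_i)$. The graph $D_{i-1}$ is immediately a DAG because $v_i$ has no outgoing edges, and appending $v_i$ to the topological ordering of $D_i$ stays valid. Verifying conditions 1--3 of the consistent extension definition is then routine: the vertex sets and adjacencies of $D_{i-1}$ and $G_{i-1}$ agree by construction; the directed edges of $G_{i-1}$ incident to $v_i$ all point into $v_i$ by condition~(i) of the potential-sink definition, so they are included in $D_{i-1}$; and each undirected edge $u-v_i$ in $G_{i-1}$ is oriented as $u\rightarrow v_i$.

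I expect condition~4, the preservation of v-structures, to be the main obstacle. V-structures not touching $v_i$ transfer directly from the inductive hypothesis relating $G_i$ with $D_i$. For a v-structure $a\rightarrow v_i \leftarrow b$ with $a\not\sim b$ in $D_{i-1}$, I would distinguish two cases. If both $a$ and $b$ are parents of $v_i$ in $G_{i-1}$, then the same triple is already a v-structure in $G_{i-1}$. Otherwise at least one of them, say $a$, is a sibling of $v_i$ in $G_{i-1}$, and then condition~(ii) of the potential-sink definition forces $a$ to be adjacent to every other neighbor of $v_i$, in particular to $b$, which contradicts $a\not\sim b$. The reverse inclusion is immediate because every arc from $\Pa_{G_{i-1}}(v_i)$ into $v_i$ is preserved verbatim in $D_{i-1}$.

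Finally, I would note that the iterative process is well-defined, i.e., that a potential-sink actually exists in every non-empty $G_{i-1}$ reached along the way. This follows from the converse observation: given any consistent extension of $G_{i-1}$, its topologically last vertex $v$ must be a potential-sink in $G_{i-1}$, since any outgoing arc at $v$ or any pair of non-adjacent neighbors forced to point into $v$ would either violate the topological order of the extension or manifest as a v-structure present in the extension but absent from $G_{i-1}$.
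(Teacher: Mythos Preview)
Your proof is correct and complete. The paper does not actually prove this fact; it is cited from \cite{DorTarsi1992}, and the only argument the paper supplies is the one-line observation that each $G_i$ has a potential-sink because extendability is closed under taking induced subgraphs. Your descending induction fills in exactly what the paper omits, and your case analysis for condition~4 is sound (in particular, $v_i$ cannot appear as an outer vertex of a v-structure in either $G_{i-1}$ or $D_{i-1}$, so the only nontrivial case is $v_i$ as the center, which you handle correctly via the potential-sink property).

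One small remark on your final paragraph: your argument that a potential-sink exists in $G_{i-1}$ begins with ``given any consistent extension of $G_{i-1}$'', which presupposes that $G_{i-1}$ is extendable. For $i\ge 2$ this is not given directly by the hypothesis (only $G_0$ is assumed extendable), so you should explicitly invoke the fact the paper states---that restricting a consistent extension of $G_0$ to any subset of vertices yields a consistent extension of the corresponding induced subgraph, hence each $G_{i-1}$ is extendable. With that sentence added, your existence argument is airtight and in fact slightly sharper than the paper's, since you identify the potential-sink concretely as the last vertex in any topological ordering of an extension.
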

In particular, it is true that every $G_i$ has a potential-sink due to the
observation that removing a vertex and its incident edges from an extendable
graph, again yields an extendable graph (extendability is closed under taking
subgraphs).

%\begin{figure}
%  \centering
%  \begin{tikzpicture}[yscale=0.75]
%    \node (desc) at (0,.9) {$G$};
%    \node (a) at (0,0) {$a$};
%    \node (b) at (1,1) {$b$};
%    \node (c) at (1,-1) {$c$};
%    \node (d) at (2,0) {$d$};
%    \node (e) at (3,0) {$e$};
%
%    \draw [-] (a) -- (b);
%    \draw [-] (a) -- (c);
%    \draw [arc] (b) -- (d);
%    \draw [arc] (c) -- (d);
%    \draw [-] (d) -- (e);
%    \draw [arc] (b) -- (c);
%  \end{tikzpicture}
%  \hspace*{7mm}
%  \begin{tikzpicture}[yscale=0.75]
%    \node (desc) at (0,.9) {$D$};
%    \node (a) at (0,0) {$a$};
%    \node (b) at (1,1) {$b$};
%    \node (c) at (1,-1) {$c$};
%    \node (d) at (2,0) {$d$};
%    \node (e) at (3,0) {$e$};
%
%    \draw [arc] (b) -- (a);
%    \draw [arc] (c) -- (a);
%    \draw [arc] (b) -- (d);
%    \draw [arc] (c) -- (d);
%    \draw [arc] (d) -- (e);
%    \draw [arc] (b) -- (c);
%  \end{tikzpicture}
%   \hspace*{7mm}
%   \begin{tikzpicture}[yscale=0.75]
%   \node (desc) at (0,.9) {$M$};
%    \node (a) at (0,0) {$a$};
%    \node (b) at (1,1) {$b$};
%    \node (c) at (1,-1) {$c$};
%    \node (d) at (2,0) {$d$};
%    \node (e) at (3,0) {$e$};
%
%    \draw [-] (a) -- (b);
%    \draw [-] (a) -- (c);
%    \draw [arc] (b) -- (d);
%    \draw [arc] (c) -- (d);
%    \draw [arc] (d) -- (e);
%    \draw [arc] (b) -- (c);
%  \end{tikzpicture}
%
% \caption{A PDAG $G$ with a consistent extension $D$ and its maximal
% orientation $M$. The vertices $a$ and $e$ are the only potential-sinks in $G$.}
%  \label{fig:ce:mo}
%\end{figure}

These notions are exemplified in Fig.~\ref{fig:ce:mo:new}. The figure shows that the
consistent extension $D$ can be obtained by iteratively identifying a
potential-sink (at the start, $e$ is the only potential-sink), orienting its
incident undirected edges towards it,
and continuing on the induced subgraph over the remaining vertices.
In this particular case, $d$ would be the potential-sink found afterward,
followed by $b$ and $c$ (in arbitrary order), and finally $a$. 

The graph $M$ is the maximal orientation of $G$.
It contains the arc $d \rightarrow e$ as $d \leftarrow e$ cannot occur in
any consistent extension of $G$ (it creates a new v-structure that is not in
$G$, see Meek rule R1) and $a \rightarrow d$ follows from Meek rule R3,
whereas $a - b$ and $a - c$ are oriented in both directions in different consistent
extensions of $D$ and are consequently undirected in $M$.

\section{Two New Simple Algorithms for Extendability} \label{sec:newext}
As discussed above, algorithms for extending PDAGs play a key role in the
maximal orientation task. While, from a theoretical point of view, previous results suggest that it is likely not possible to further improve the
asymptotic run time of extendability algorithms, as \citet{WBL2021} gave a
conditional $O(n^3)$ lower bound for combinatorial algorithms, from a
practical perspective, the current algorithms are either extremely simple, as
Dor-Tarsi's approach, or very sophisticated with considerable practical overhead,
as the methods proposed by~\citet{WBL2021}.
In this work, we are searching for a middle ground and give
two novel approaches to extend PDAGs, with the main focus lying on simplicity
and effectiveness. The first approach is a direct modification of the Dor-Tarsi
algorithm, which greatly improves its empirical performance. The second one
gives a practical $O(n^3)$ algorithm, which is, however, conceptually simpler than the one
presented by~\citet{WBL2021} and has significantly less overhead.

\subsection{Dor-Tarsi with Degree-Heuristic}
The Dor-Tarsi algorithm has an asymptotic run time of $O(n^4)$ and is
computationally simplistic. It iteratively identifies a potential-sink and then
removes it and its incident edges from the graph. Hence, the whole algorithm consists of $n$
iterations, each amounting to the task of finding a potential-sink. Naively, a
potential-sink can be found in time $O(n^3)$ by going through all vertices and
checking for each vertex whether it satisfies the potential-sink property. 
Implemented this way, each vertex is repeatedly tested for being a
potential-sink throughout the algorithm (until it eventually becomes a
potential-sink and is subsequently removed from the graph). While this
brute-force method appears to be severely worse than more clever approaches,
which store and update relevant information such as a list of all
potential-sinks in sophisticated data structures, there are two notable advantages of
this approach: (i) whenever a potential-sink is found, the loop over all
vertices can be exited immediately, and (ii) the check for potential-sinkness
can be cancelled once a single missing edge violates the potential-sink
property. 

Hence, the first observation of this work, discussed in more detail throughout
the experiments in Section~\ref{sec:eval}, is that in many cases, a naive
implementation of the Dor-Tarsi algorithm performs quite well in practice (in
particular in cases where many potential-sinks cause frequent early exits of
the for-loop and also for graphs in which the test for potential-sinkness
usually fails early), being competitive with more subtle approaches. 
\begin{algorithm2e}[t]
  \caption{A heuristic implementation of Dor-Tarsi iterating over the vertices in
    order of increasing degree.}
  \label{alg:dth}
  \DontPrintSemicolon
  \LinesNumbered
  \SetKwInOut{Input}{input}\SetKwInOut{Output}{output}
  \SetKw{Break}{break}
  \SetKwFor{Rep}{repeat}{}{end}
  \Input{A PDAG $G = (V,A,E)$.}
  \Output{$D \in [G]$ or $\bot$ if $[G] = \emptyset$.}

  $D := (V, \emptyset)$ \; 
  \Rep{$n$ times}{
    \For{$v \in V$ in increasing degree in $G$ \label{line:dth:innerfor}}{ 
      \If{$v$ is a potential-sink}{
        Remove $v$ and its incident edges from $G$. \;
        Add arcs $\{(u,v) \mid u \in \Si_G(v) \cup \Pa_G(v)\}$ to $D$. \;
        \Break \;
      }
    }
    \If{no potential-sink has been found}{
      \Return $\bot$ \;
    }
  }

  \Return $D$ \;
\end{algorithm2e}
Building on the surprising effectiveness of the Dor-Tarsi method, a heuristic
refinement is given in Algorithm~\ref{alg:dth}. The idea
is to go through the vertices in order of increasing degree. Using this
heuristic leads to fewer cost in case the loop is exited early as the iteration
in line~\ref{line:dth:innerfor} always starts with the ``cheapest" vertices,
i.\,e., those with minimal degree (checking for potential-sinkness results in
worst-case costs quadratic in the number of neighbors $O(|\Ne(v)|^2)$). Fig.~\ref{fig:dh:ex}
exemplifies this advantage.
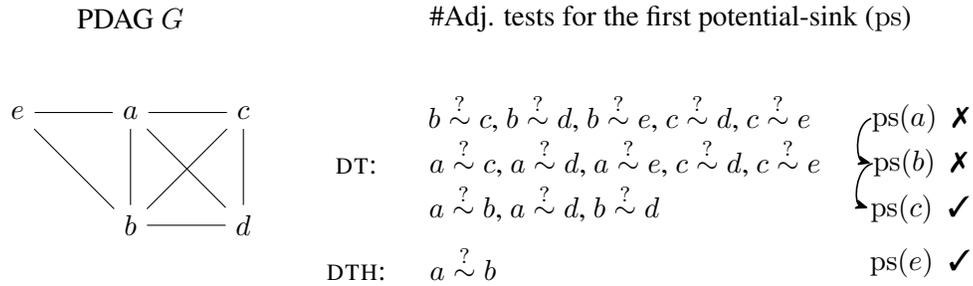
\begin{figure}
  \centering
  \begin{tikzpicture}
    \node (a) at (0,0) {$a$};
    \node (b) at (0,-1.5) {$b$};
    \node (c) at (1.5,0) {$c$};
    \node (d) at (1.5,-1.5) {$d$};
    \node (e) at (-1.5,0) {$e$};

    \draw [-] (a) -- (b);
    \draw [-] (a) -- (c);
    \draw [-] (a) -- (d);
    \draw [-] (a) -- (e);
    \draw [-] (b) -- (c);
    \draw [-] (b) -- (d);
    \draw [-] (b) -- (e);
    \draw [-] (c) -- (d);

    \node (lg) at (0, 1.25) {
      PDAG $G$
    };

    \node (l) at (7.18, 1.25) {
      \#Adj. tests for the first potential-sink ($\mathrm{ps}$)
    };

    \node (dth) at (3,-0.7125) {
      \textsc{dt}:
    };
    \node (ca) at (6.5,0) {
      $b \isadj c$, $b \isadj d$, $b \isadj e$, $c \isadj d$, $c \isadj e$ 
    };
    \node (cb) at (6.575,-.6) {
      $a \isadj c$, $a \isadj d$, $a \isadj e$, $c \isadj d$, $c \isadj e$ 
    };
    \node (cc) at (5.5,-1.2) {
      $a \isadj b$, $a \isadj d$, $b \isadj d$
    };
    \node (dth) at (3,-2.125) {
      \textsc{dth}:
    };
    \node (ce) at (4.42, -2) {
      $a \isadj b$
    };

    \node[inner sep=0] (psa) at (10.5,-.08) {
      $\mathrm{ps}(a) \;$ \xmark
    };

    \node[inner sep=0] (psb) at (10.5, -0.68) {
      $\mathrm{ps}(b) \;$ \xmark
    };

    \node[inner sep=0] (psc) at (10.5, -1.28) {
      $\mathrm{ps}(c) \;$ \cmark
    };

    \node[inner sep=0] (pse) at (10.5, -2) {
      $\mathrm{ps}(e) \;$ \cmark
    };

    \draw[arc] (psa.west) to[bend right=75] (psb.west);
    \draw[arc] (psb.west) to[bend right=75] (psc.west);
\end{tikzpicture}

  \caption{
    A PDAG and the adjacency tests required by Dor-Tarsi (\textsc{dt}) and the
    heuristic adaption (\textsc{dth}) for finding the first potential-sink. We
    assume that \textsc{dt} iterates over the vertices in alphabetical order.
    \textsc{dt} checks vertices  $a$ and $b$ first, which are no potential-sinks.
    \textsc{dth} starts with the lowest-degree vertex $e$. This reduces the cost of
    testing potential-sinkness and increases the chances of finding a potential-sink early. 
  }
  \label{fig:dh:ex}
\end{figure}

It demonstrates that checking potential-sinkness
for the low-degree vertices
first has, on the one hand, lower cost per vertex and, on the other hand, is often more likely to
succeed early (for example, vertices with degree one and no outgoing edges are always
potential-sinks). We note that some overhead is induced by this approach because
the degree of the vertices has to be continuously updated.
However, this cost is mostly tolerable as our experiments presented in Section~\ref{sec:eval}
confirm and the heuristic yields a simple and practical improvement over the
standard Dor-Tarsi algorithm. Clearly, a heuristic is not always optimal and
worst-case examples can be constructed where the algorithm's run time reaches its
upper asymptotic bound:

\begin{fact}
  There are instances on which the Dor-Tarsi heuristic yields a run time of $\Omega(n^4)$.
\end{fact}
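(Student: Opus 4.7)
The plan is to exhibit a family of PDAGs $\{G_n\}$ on which Algorithm~\ref{alg:dth} uses $\Omega(n^4)$ elementary operations. The strategy has two ingredients: first, render the degree heuristic useless by choosing $G_n$ to be essentially regular, so that the increasing-degree iteration order degenerates (up to ties) to the naive index order; second, reproduce the worst case of plain Dor-Tarsi on such an input, namely $\Omega(n)$ outer iterations each costing $\Omega(n^3)$ operations.

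Concretely, I would construct $G_n$ on vertices $v_1, \dots, v_n$ by starting from an undirected clique $K_n$ and adding a small (linear-size) set of directed arcs that
(i)~forces the unique elimination order $v_n, v_{n-1}, \dots, v_1$, so that in iteration $i$ the unique potential-sink is $v_{n-i+1}$;
(ii)~keeps the remaining vertices of equal degree up to an additive constant, so that the heuristic's ordering coincides with the index order up to ties; and
(iii)~ensures that each candidate $v_j$ with $j < n-i+1$ passes the cheap ``no outgoing arc'' check and fails the sibling-adjacency test only after $\Omega((n-i)^2)$ adjacency queries, because the single ``missing'' sibling–neighbor adjacency of $v_j$ sits at the end of the inner iteration order used by Algorithm~\ref{alg:dth}. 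Under these conditions, iteration $i$ executes $\Theta(n-i)$ expensive vertex tests of cost $\Theta((n-i)^2)$ each, contributing $\Theta((n-i)^3)$ per iteration; summing $\sum_{i=1}^{n}(n-i)^3 = \Theta(n^4)$ yields the claimed lower bound.

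The main obstacle is meeting conditions (i)--(iii) \emph{simultaneously at every stage of the elimination}, since each removal of a potential-sink must leave a subgraph with the same structural properties. A natural way to handle this is to design $G_n$ recursively, so that $G_n$ restricted to $\{v_1, \dots, v_i\}$ equals $G_i$; this reduces the construction to choosing, for each new vertex $v_j$, the set of directed arcs incident to $v_j$ together with a single ``missing adjacency partner'' among $\{v_1,\dots,v_{j-1}\}$. Extendability of $G_n$ and of every intermediate subgraph, as well as uniqueness of the potential-sink at each step, then reduces via Fact~\ref{fact:dt} to a routine check that $v_j$ has no outgoing arc and that the incoming arcs of $v_j$ introduce no new v-structure. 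Once the family is fixed, the $\Omega(n^4)$ bound follows from the iteration-by-iteration counting argument sketched above.
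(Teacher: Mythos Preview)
Your plan departs from the paper's argument and, as stated, has a genuine gap. The paper does not neutralize the degree heuristic via regularity; it makes the heuristic actively harmful. The instance is fully undirected: $V=\{v,w\}\cup C_v\cup C_w\cup C_{vw}$ with $|C_v|=|C_w|=2k$, $|C_{vw}|=k$, each $C_L$ a clique, $v$ joined to $C_v\cup C_{vw}$, $w$ to $C_w\cup C_{vw}$, and $v\not\sim w$. The $k$ vertices of $C_{vw}$ have the smallest degree ($k{+}1$) yet are \emph{never} potential-sinks, since their neighborhood contains the single non-adjacent pair $v,w$; the actual potential-sinks live in $C_v\cup C_w$, which have higher degree. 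Thus in each of $\Omega(k)$ outer iterations the heuristic first tests all of $C_{vw}$, and each such test scans $\Omega(k^2)$ neighbor pairs before hitting the lone bad pair, giving $\Omega(k^4)=\Omega(n^4)$. No recursion, no arcs, and extendability is immediate because the graph is chordal.

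In your sketch, by contrast, the constraints collide. Condition~(iii) requires that every $v_j$ with $j<n{-}i{+}1$ pass the ``no outgoing arc'' check; together with the sink $v_{n-i+1}$ having no outgoing arc, no remaining vertex ever has one, so the graph is effectively undirected at every stage and the ``linear-size set of directed arcs'' in~(i) cannot exist, let alone force the elimination order. What is left is a nearly-regular undirected near-clique with a unique simplicial vertex at every step of the peeling---but you have not exhibited such a family, and the natural candidates ($K_n$ minus a perfect matching, $K_n$ minus a Hamiltonian path) are not chordal for $n\ge 5$ and hence not extendable at all, so the algorithm would simply return~$\bot$. The paper's device of \emph{separating} the low-degree vertices from the potential-sinks is precisely what lets one satisfy all requirements simultaneously without these complications.
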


\begin{proof}
  Consider graph $G = (V,A=\emptyset,E)$ with $V = \{v, w\} \cup C_v \cup C_w \cup C_{vw}$, 
  where $C_{vw}$ consists of $k$ vertices and $C_v$ and $C_w$ consist of $2k$
  vertices each. The edge set $E$ is constructed as
%  \[
%    E = \{\{x,y\} \mid x,y \in C_L \text{ with } L \in \{v, w, vw\}\} \cup
%    (\{v,w\} \times C_{vw}) \cup (\{v\} \times C_v) \cup (\{w\} \times C_w).
%  \]
  \[
    E = \{\{x,y\} \mid x,y \in C_{L} \text{ for } L = v, w, vw\} \cup
     \{ \{v,x\}\mid x\in C_{v} \cup C_{vw}\} \cup
    \{ \{w,x\}\mid x\in C_{w} \cup C_{vw}\} .
  \]
  In words, the cliques $C_v$, $C_w$ are fully connected to $v$ and $w$,
  respectively, and $C_{vw}$ is fully connected to both $v$ and $w$.
  Moreover, $v$ and $w$ are non-adjacent.
  The graph is illustrated in Fig.~\ref{fig:ex:counter}.

  Dor-Tarsi in combination with the degree-heuristic always iterates over the
  vertices in $C_{vw}$ with initial degree $k+1$ first. None of these vertices
  are, however, potential-sinks. Consequently, $\Omega(k)$ vertices are checked 
  before a potential-sink is found. Moreover, testing for potential-sinkness
  only fails for one pair of neighbors, that is, for $v$ and $w$, causing the
  loop over the pairs of neighbors to require $\Omega(k^2)$ iterations before exiting
  with high probability. Note that for $n=5k+2$, $k$ is in $\Omega(n)$. Overall, the steps
  undertaken by the heuristic yield a run time of $\Omega(n^4)$.
\end{proof}

\begin{figure}
  \centering
  \begin{tikzpicture}[yscale=0.8]
    \node (v) at (0,-1) {$v$};
    \node (w) at (6,-1) {$w$};

    \node (l1) at (0,1.7) {$2k$ vertices};
    \draw[decorate, ultra thick, decoration= {calligraphic brace, amplitude=7pt}] (-2.2,1) -- (2.2,1);

    \node[draw,circle,inner sep=1.2pt,fill] (1) at (-2,0) {};
    \node[draw,circle,inner sep=1.2pt,fill] (2) at (-1,0) {};
    \node[] (3) at (0,0) {\dots};
    \node[draw,circle,inner sep=1.2pt,fill] (4) at (1,0) {};
    \node[draw,circle,inner sep=1.2pt,fill] (5) at (2,0) {};

    \foreach \x in {1,2,3,4,5} {
      \foreach \y in {1,2,3,4,5} {
        \pgfmathsetmacro\xp{\x+1}
        \ifdim \xp pt<\y pt
        \draw [-] (\x) to[bend left] (\y);
        \fi
      }
    }

    \draw[-] (1) -- (2);
    \draw[-] (2) -- (3);
    \draw[-] (3) -- (4);
    \draw[-] (4) -- (5);

    \foreach \x in {1,2,3,4,5} {
      \draw [-] (v) -- (\x);
    }
    
    \node (l2) at (6,1.7) {$2k$ vertices};
    \draw[decorate, ultra thick, decoration= {calligraphic brace,
      amplitude=7pt}] (3.8,1) -- (8.2,1);
    \node[draw,circle,inner sep=1.2pt,fill] (1) at (4,0) {};
    \node[draw,circle,inner sep=1.2pt,fill] (2) at (5,0) {};
    \node[] (3) at (6,0) {\dots};
    \node[draw,circle,inner sep=1.2pt,fill] (4) at (7,0) {};
    \node[draw,circle,inner sep=1.2pt,fill] (5) at (8,0) {};
   
    \foreach \x in {1,2,3,4,5} {
      \foreach \y in {1,2,3,4,5} {
        \pgfmathsetmacro\xp{\x+1}
        \ifdim \xp pt<\y pt
        \draw [-] (\x) to[bend left] (\y);
        \fi
      }
    }

    \draw[-] (1) -- (2);
    \draw[-] (2) -- (3);
    \draw[-] (3) -- (4);
    \draw[-] (4) -- (5);
    
    \foreach \x in {1,2,3,4,5} {
      \draw [-] (w) -- (\x);
    }

    \node[draw,circle,inner sep=1.2pt,fill] (1) at (1,-2) {};
    \node[draw,circle,inner sep=1.2pt,fill] (2) at (2,-2) {};
    \node[] (3) at (3,-2) {\dots};
    \node[draw,circle,inner sep=1.2pt,fill] (4) at (4,-2) {};
    \node[draw,circle,inner sep=1.2pt,fill] (5) at (5,-2) {};

    \foreach \x in {1,2,3,4,5} {
      \foreach \y in {1,2,3,4,5} {
        \pgfmathsetmacro\xp{\x+1}
        \ifdim \xp pt<\y pt
        \draw [-] (\x) to[bend right] (\y);
        \fi
      }
    }

    \draw[-] (1) -- (2);
    \draw[-] (2) -- (3);
    \draw[-] (3) -- (4);
    \draw[-] (4) -- (5);

    \foreach \x in {1,2,3,4,5} {
      \draw [-] (v) -- (\x);
      \draw [-] (w) -- (\x);
    }
    \node (l2) at (3,-3.7) {$k$ vertices};
    \draw[decorate, ultra thick, decoration= {calligraphic brace, mirror,
      amplitude=7pt}] (.8,-3) -- (5.2,-3);
  \end{tikzpicture}

  \caption{An example instance where the heuristic exhibits run time $\Omega(n^4)$.}
  \label{fig:ex:counter}
\end{figure}
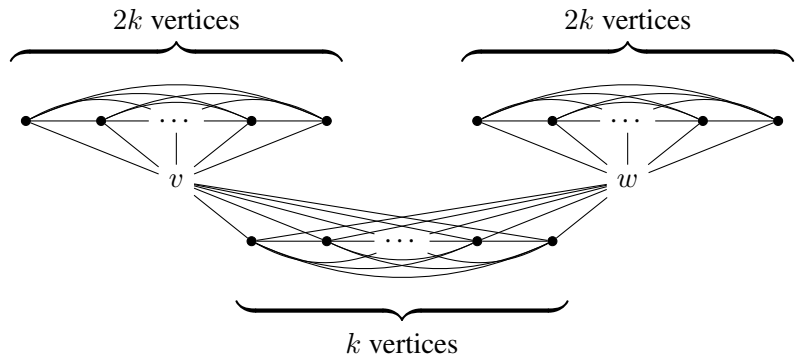

\subsection{Dor-Tarsi with Improved Worst-Case Complexity} \label{sec:dtic}
In the previous section, a direct and effective heuristic improvement of the
Dor-Tarsi algorithm is presented. In this section, we aim to give another
simple adaptation of Dor-Tarsi, but this time with an $O(n^3)$ worst-case run
time, thus matching the lower-bounds of the algorithm given in~\citep{WBL2021}.

If we take a look at the counterexample from Fig.~\ref{fig:ex:counter} again,
we observe that Algorithm~\ref{alg:dth} repeatedly iterates over the same
neighborhoods when searching for a potential-sink, i.\,e., there is no
information stored between iterations. It is clearly desirable to avoid this
repeated computational effort. Previously, improvements were made by
constructing elaborate data structures, which maintain the set of
potential-sinks throughout the course of the algorithm~\citep{WBL2021}. Such
data structures, however, induce a significant practical overhead and, in
particular, demand an expensive initialization step at the start of the
algorithm, even before searching for the first potential-sink.

To address these issues, our proposed approach iterates over the neighbors of
every vertex $v$ exactly once, even though potential-sinkness of $v$ might be
checked multiple times during the course of the algorithm. After the neighbors
of a vertex $v$ have been iterated over, all tuples of neighbors violating the
potential-sink property of $v$ are stored in the set $B[v]$, which is afterward
continuously updated every time vertices are removed from the graph. The sets
$B[v]$ are computed lazily, that is, only at the time the potential-sinkness of
$v$ is checked for the first time and thus there is no pre-computation
involved. In case the set of violating neighbors $B[v]$ is empty or becomes
empty by removing other vertices, $v$ satisfies the potential-sink property,
which is due to the observation that throughout the course of the algorithm no
\emph{new} potential-sinkness violations are incurred.

\begin{algorithm2e}[t]
  \caption{An adaptation of Dor-Tarsi with worst-case complexity $O(n^3)$.}
  \label{alg:dtch}
  \DontPrintSemicolon
  \LinesNumbered
  \SetKwInOut{Input}{input}
  \SetKwInOut{Output}{output}
  \SetKw{Break}{break}
  \Input{A PDAG $G = (V,A,E)$.}
  \Output{$D \in [G]$ or $\bot$ if $[G] = \emptyset$.}

  $D := (V, \emptyset)$ \;
  $B := \text{array of } n \text{ initially empty sets}$\;
  $C := \text{bitvector of length } n \text{ initialized with }  \textit{false}$\;
  \While{there are vertices left in $G$}{ \label{line:dtch:outerloop}
    \ForEach{$v \in V$ in increasing degree in $G$}{ \label{line:dtch:vloop}
      \If{$C[v] =  \textit{false}$ and $\Ch_G(v) = \emptyset$}{ \label{line:dtch:begininner}
        $B[v] := B[v] \cup \{(u, u') \mid u \in \Si_G(v), u' \in \Si_G(v) \cup \Pa_G(v) \land u \neq u' \land u \not\sim_G u'\}$ \; \label{line:dtch:nbrupdate}
        $C[v] :=  \textit{true}$ \;
      }
      \If{$C[v] =  \textit{true}$ and $B[v] = \emptyset$}{ \label{line:dtch:beginrem}
        Remove $v$ and its incident edges from $G$. \;
        Add arcs $\{(u,v) \mid u \in \Si_G(v) \cup \Pa_G(v)\}$ to $D$. \;
       % Remove $v$ from all sets in $B$ containing a tuple where $v$ is included. \;
        Remove from all sets in $B$ tuples  including $v$. \;
        \Break
      } \label{line:dtch:endrem}
    }
    \If{no potential-sink has been found}{
      \Return $\bot$ \;
    }
  }
  \Return $D$ \;
\end{algorithm2e}
The whole approach is described in detail in Algorithm~\ref{alg:dtch}. We
combine it with the heuristic from the previous section, which is generally
favorable to use. Algorithm~\ref{alg:dtch} combines the advantages of both
\textsc{dt} and the algorithm from~\cite{WBL2021} by, one the one hand,
storing information from previous iterations to ensure an $O(n^3)$ worst-case
run time while, on the other hand, keeping the overhead to a minimum and,
in particular, not relying on an initialization step.

\begin{thm}
  Algorithm~\ref{alg:dtch} is implementable with expected worst-case time complexity $O(n^3)$.
\end{thm}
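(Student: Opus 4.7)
The plan is to amortize the total work into four categories and bound each by $O(n^3)$. First, consider the lazy initialization of $B[v]$ in line~\ref{line:dtch:nbrupdate}. The guard flag $C[v]$ ensures that this happens at most once per vertex throughout the entire execution, and each initialization iterates over all ordered pairs of neighbors of $v$, performing one adjacency test per pair in $O(|\mathrm{Ne}(v)|^2) = O(n^2)$ time. Summing over the $n$ vertices yields $O(n^3)$ total initialization work, which will dominate the overall bound.

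Second, account for the repeated cheap checks in the inner for-loop of line~\ref{line:dtch:vloop}. Once $B[v]$ has been initialized, testing potential-sinkness reduces to verifying $\mathrm{Ch}_G(v) = \emptyset$ and $B[v] = \emptyset$, both implementable in $O(1)$ provided a child-counter is maintained for each vertex and the cardinality of every $B[v]$ is tracked incrementally. Since the outer while loop runs at most $n$ times and the inner for-loop visits at most $n$ vertices per pass, the cumulative cost of these cheap checks is $O(n^2)$, comfortably absorbed by $O(n^3)$.

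The main obstacle lies in the third category, the removal step of lines~\ref{line:dtch:beginrem}--\ref{line:dtch:endrem}. A naive implementation that scans every $B[w]$ to strip out tuples containing the newly removed $v$ could cost $\Omega(n^2)$ per removal and $\Omega(n^3)$ per individual deletion, undermining the bound. I would circumvent this by augmenting the lazy initialization with back-pointers: whenever a tuple $(u,u')$ is inserted into $B[v]$, two cross-references are recorded in auxiliary lists indexed by $u$ and by $u'$. When $v$ is removed, I traverse $v$'s back-pointer list and delete each referenced tuple from the corresponding hash-based set $B[w]$ in $O(1)$ expected time, updating its cardinality counter. Since the total number of tuples ever inserted across all sets is $O(n^3)$ by the first category, the total expected work for removals is also $O(n^3)$ --- and this is precisely where the word \emph{expected} in the theorem statement enters, since hashing provides only expected constant-time deletion.

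Finally, iterating vertices in increasing degree can be supported by a bucket structure indexed by degree; each incident edge removal triggers an $O(1)$ bucket move for both endpoints, and since at most $O(n^2)$ edge removals occur in total, the degree-maintenance overhead is $O(n^2)$, again absorbed. Combining the four contributions yields the claimed expected worst-case bound of $O(n^3)$.
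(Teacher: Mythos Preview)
Your proposal is correct and follows essentially the same approach as the paper: amortize the once-per-vertex initialization of $B[v]$ to $O(n^3)$, and use auxiliary back-pointer lists so that tuple deletions upon removal of a potential-sink cost $O(1)$ expected each. The only minor difference is that the paper bounds the removal step more tightly as $O(n^2)$ \emph{per} removed vertex (since at most $O(n^2)$ tuples anywhere can contain a fixed $v$), whereas you use the coarser but equally sufficient bound ``total deletions $\le$ total insertions $= O(n^3)$''; your added remarks on the degree-bucket maintenance and the $O(1)$ cheap checks are details the paper leaves implicit.
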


\begin{proof}
  We assume that edge deletion and adjacency tests are supported in constant
  time and iterating over the neighbors of $v$ has cost $O(|\Ne_G(v)|)$. Using hash
  tables to store the neighbors of a vertex, these run times are reached
  \emph{in expectation}. The graph representation is discussed in more detail
  in the appendix (Section~\ref{appendix:graph-repr}).

  The computation of neighbors violating the potential-sink property in
  line~\ref{line:dtch:nbrupdate} can clearly be implemented in $O(n^2)$ by
  looping over all pairs of neighbors of $v$ and checking adjacency.
  Due to the flag $C[v]$, the neighborhood for each vertex $v$ is visited
  exactly once and thus line~\ref{line:dtch:nbrupdate} is executed $O(n)$
  times, yielding a total run time of $O(n^3)$ for searching potential-sinks.
  The removal of a potential-sink in lines~\ref{line:dtch:beginrem} to
  \ref{line:dtch:endrem} runs in time $O(n^2)$ as $v$ has at most $n$ neighbors
  for which edges are removed from $G$ and added to $D$ and there are at most
  $O(n^2)$ tuples in the sets in $B$ where $v$ is included.
  By storing all tuples where $v$ is included separately, we can iterate over
  them and remove each tuple from the corresponding set in $O(1)$.
  As every vertex is removed at most once from the graph, the removal is
  repeated $O(n)$ times, yielding a run time of $O(n^3)$ for the removal of all
  vertices. Consequently, the run time of the whole algorithm is bounded by
  $O(n^3)$.
\end{proof}

The correctness of the algorithm follows immediately from the correctness of
the Dor-Tarsi algorithm (Fact~\ref{fact:dt}) as our proposed modification only avoids repeated
visits to neighbors but does not skip any neighbor check. As a final remark, we
note that Algorithm~\ref{alg:dtch} is also implementable using a combination of
linked lists and an adjacency matrix to
represent the neighbors of a vertex $v$, as proposed by~\citep{WBL2021}.
Using such a representation, instead
of collecting all neighbors of $v$ violating the potential-sink property in
line~\ref{line:dtch:nbrupdate}, one could stop as soon as the first pair of
neighbors violating the property is found and store a pointer to the violating
neighbor, allowing to start the iteration over the neighbors of $v$ the next
time at the lastly visited neighbor\footnote{Such a strategy is not possible
for hashed data structures due to potential rehashing.}. However, using linked
lists instead of hash sets to represent adjacencies has no impact on the
worst-case complexity and is not significantly faster than the hashed data
structure. More details about the comparison between these two representations
can be found in Appendix~\ref{appendix:graph-repr}.

\section{Evaluation of Extension Algorithms} \label{sec:eval}
In this section, we conduct an experimental evaluation of the algorithms
discussed in the previous sections to demonstrate their practical effectiveness.
We compare the Dor-Tarsi algorithm (\textsc{dt}), its heuristic
refinement (\textsc{dth}), its adaptation with improved worst-case complexity 
(\textsc{dtic}), and the algorithm given by~\cite{WBL2021} (\textsc{wbl}).
All algorithms are implemented in Julia~\citep{Bezanson2017}.
%and the graphs are
%represented by storing the neighbors of each vertex in hash tables.
We present the results for random PDAGs, which are generated by
(i) creating a random DAG $D$, (ii) replacing all directed edges not participating
in a v-structure by an undirected edge, and (iii) orienting between 
two and five (randomly chosen) undirected edges according to $D$,
resulting in an extendable PDAG.
The initial DAG $D$ is generated by creating a random undirected graph and
afterward using a random permutation of its vertices as a topological ordering
according to which the edges are then oriented. We vary the number of vertices
in the input graphs by setting $n = 128, 256, \dots, 8192$ and the number of
edges is set to $m = 3 \cdot n, 5 \cdot n, \log_2(n) \cdot n, \sqrt n \cdot n$.
For each choice of parameters, we generate ten instances and then run every
algorithm ten times on each instance. The run times reported in this section
and in upcoming sections are averages over all runs on those ten instances.
Further experimental results for scale-free PDAGs and chordal graphs are
given in Appendix~\ref{appendix:further-eval-ext}.

\begin{figure}
  \centering
  \resizebox{\textwidth}{!}{\input{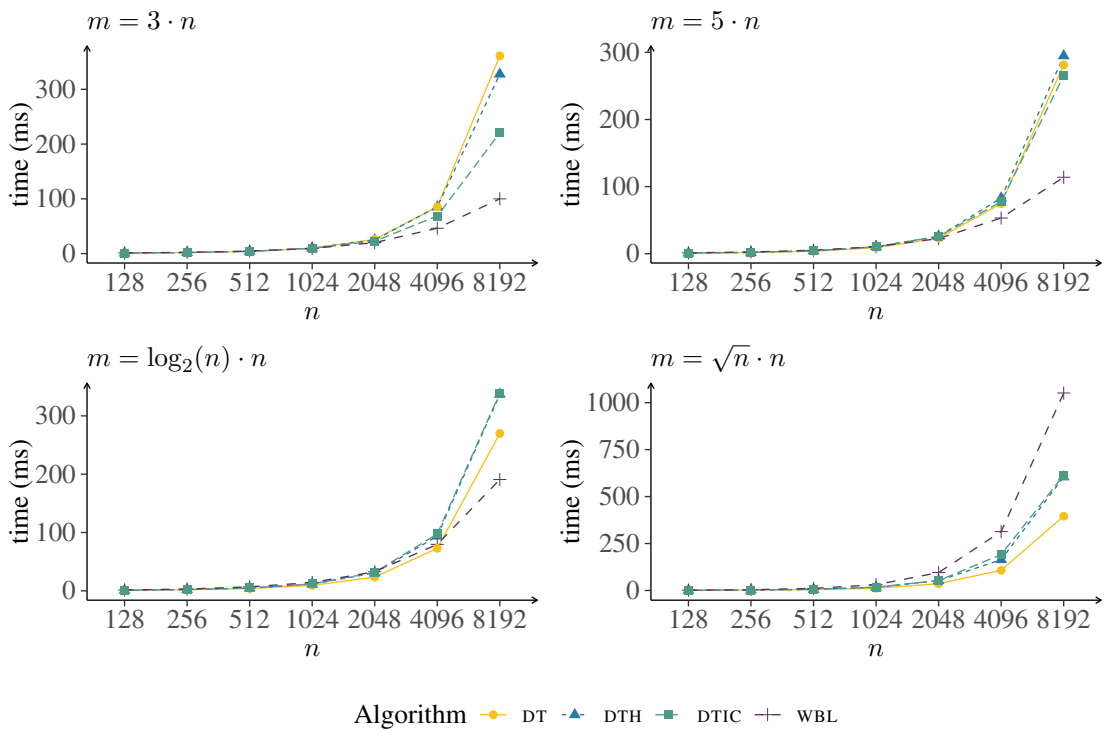}}
  \caption{Run times of the algorithms \textsc{dt}, \textsc{dth},
    \textsc{dtic}, and \textsc{wbl} on randomly generated PDAGs of $n$ vertices and $m$ edges,  
    with $m = 3 \cdot n$  (top left), $m = 5 \cdot n$  (top right),
    $m = \log_2(n) \cdot n$  (bottom left), and $m = \sqrt n \cdot n$
    (bottom right).}
  \label{fig:results-ext-pdag-er}
\end{figure}

The results are shown in Fig.~\ref{fig:results-ext-pdag-er}. In the top left
plot, the run times on sparse graphs with $m = 3 \cdot n$ edges are given.  %illustrated.
For small graphs, there are no visible differences among the run times of the
algorithms. However, with an increasing number of vertices, it becomes clear
that \textsc{wbl} performs best, which is quite expected, as it was shown to
have linear-time $O(m)$ for sparse graphs (formalized as constant-degeneracy
graphs in~\citep{WBL2021}). 
The run time of \textsc{dt} increases rapidly as the number of vertices grows,
while both \textsc{dth} and \textsc{dtic} exhibit a slightly smaller increase.
Although \textsc{dtic} does not outperform \textsc{wbl}, it shows a significant
improvement compared to \textsc{dt}.

In both the top right plot and the bottom left plot, \textsc{wbl} yields still
the best performance among the algorithms.
For denser graphs, with $m = \sqrt n \cdot n$ (bottom right),
\textsc{wbl} yields a weaker performance than the other algorithms. This can be
explained by the fact that \textsc{wbl} relies on an initialization step, which
requires an iteration of every pair of neighbors for each vertex in the graph
and thereby induces costs of $O(n^3)$ for dense graphs. Those costs emerge even
before searching for the first potential-sink and they are particularly high as
the initialization loops cannot be exited early. All of the three other
approaches do allow for early exits, which turns out to be a
non-negligible advantage because the graph is getting smaller and smaller
during the course of the algorithm, meaning that the high initialization effort
of \textsc{wbl} dominates the run time. Particularly for denser
graphs, \textsc{dt} is
even slightly faster than \textsc{dth} and \textsc{dtic}, which indicates that
both \textsc{dth} and \textsc{dtic} induce some overhead by continuously
maintaining the vertices in sorted order by their degree.

Overall, \textsc{dtic} provides a stable performance across different
graph sizes and graph densities, verifying that it combines the advantages of
\textsc{dt} and \textsc{wbl}. \textsc{dt} is often a solid choice in practice
but its performance depends heavily on the order of checking vertices for
potential-sinkness and thus combining its advantages with a guaranteed
worst-case run time of $O(n^3)$ yields a promising algorithm for practical
applications. 

\section{Application to Maximal Orientations} \label{sec:applmeek}
Building on the results in the previous sections, we are able to demonstrate an
immediate application of extension algorithms. As shown by~\citet{Chickering1995}
and~\citet{WBL2021}, it is possible to utilize extension
algorithms when computing the maximal orientation of a PDAG $G$. In practice, it is common
to implement the step from PDAG to MPDAG the ``direct way", that is,
by repeatedly applying the Meek rules (shown in
Fig.~\ref{figure:meek:rules}) in a while-loop. We argue that using
consistent extensions not only gives desirable worst-case guarantees,
but also performs remarkably well in practice.

\subsection{How to Use Extendability for the Computation of Maximal Orientations}
Applying the Meek rules repeatedly to a PDAG $G$ yields its maximal
orientation. A direct implementation, which loops over the given graph repeatedly is
computationally expensive, leading to worst-case run times such as $O(n^4
\cdot m)$\footnote{$O(m)$ edges might be oriented successively and
naively checking the applicability of the Meek rules results in costs of $O(n^4)$.}.
%
%\begin{figure}
%  \begin{tikzpicture}
%    \node (a) at (0,0) {$a$};
%    \node (b) at (1,0) {$b$};
%    \node (c) at (2,0) {$c$};
%    \node (d) at (3,0) {$d$};
%    \node (e) at (4,0) {$e$};
%
%    \draw [-] (a) -- (b);
%    \draw [-] (a) to[bend right] (c);
%    \draw [arc] (b) -- (c);
%    \draw [arc] (b) to[bend left] (e);
%    \draw [-] (c) -- (d);
%    \draw [-] (c) to[bend left] (e);
%    \draw [arc] (d) -- (e);
%  \end{tikzpicture}
%  \begin{tikzpicture}
%    \node (a) at (0,0) {$a$};
%    \node (b) at (1,0) {$b$};
%    \node (c) at (2,0) {$c$};
%    \node (d) at (3,0) {$d$};
%    \node (e) at (4,0) {$e$};
%
%    \draw [arc] (a) -- (b);
%    \draw [arc] (a) to[bend right] (c);
%    \draw [arc] (b) -- (c);
%    \draw [arc] (b) to[bend left] (e);
%    \draw [arc] (c) -- (d);
%    \draw [arc] (c) to[bend left] (e);
%    \draw [arc] (d) -- (e);
%  \end{tikzpicture}
%  \begin{tikzpicture}
%    \node (a) at (0,0) {$a$};
%    \node (b) at (1,0) {$b$};
%    \node (c) at (2,0) {$c$};
%    \node (d) at (3,0) {$d$};
%    \node (e) at (4,0) {$e$};
%
%    \draw [-] (a) -- (b);
%    \draw [-] (a) to[bend right] (c);
%    \draw [arc] (b) -- (c);
%    \draw [arc] (b) to[bend left] (e);
%    \draw [arc] (c) -- (d);
%    \draw [arc] (c) to[bend left] (e);
%    \draw [arc] (d) -- (e);
%  \end{tikzpicture}
%
%  \caption{A PDAG (left), a consistent extension (middle) of it, and its
%  maximal orientation (right).}
%  \label{fig:mo:example}
%\end{figure}
%
Clearly, it is preferable to traverse the graph only a single time, while
deciding which edges should be oriented. The crucial observation is that this
can be achieved by utilizing a topological ordering of a consistent extension
of $G$.
%
%Fig.~\ref{fig:mo:example} illustrates this idea. When traversing the vertices
%this way, the Meek rules can be propagated from left-to-right. That is, it is
%checked for vertex $d$ (under the inductive assumption that all edges having
%their right endpoint left to $d$ have already been handled) whether one of the
%Meek rules applies to its incident edges having their other endpoint located
%left from $d$. In this example, Meek rule 1 applies to the edge $c - d$. The
%appeal of this approach is that, as we have seen before, a consistent extension
%can often be found extremely fast using the algorithms presented in this paper. 
%
%
%

More precisely, when computing the maximal orientation of a PDAG, one can
distinguish between two situations: (i) the result of applying the
Meek rules yields a CPDAG, such as in the final phase of the PC algorithm, and
(ii) in case of additional background knowledge, the resulting graph is not
necessarily a CPDAG, but a maximally oriented PDAG (i.\,e., an MPDAG).

In case (i), the CPDAG can be obtained without applying the Meek rules
by extending the PDAG into a DAG and afterward computing the corresponding
CPDAG directly from the DAG. Indeed, for this second step from DAG to CPDAG, \citet{Chickering1995}
gave a linear-time algorithm\footnote{The algorithm proceeds by checking for
each edge whether it should be undirected. Although not utilizing the Meek
rules, the algorithm makes heavy use of the topological ordering provided by the
consistent extension.}. 
%We show in our experiments that, on the one hand, the extension
%step makes up the largest proportion of the run time (as finding the CPDAG from
%the DAG is extremely fast) and, on the other hand, that using this strategy is
%much faster than applying the Meek rules directly (which is due to the
%effective extension algorithms).
%
In case (ii), starting with the CPDAG obtained as in (i), that is, by performing
Chickering's DAG-to-CPDAG algorithm on a consistent extension $D$,
further orienting all edges which are directed in $G$ (i.\,e., the background knowledge
edges), and afterward applying the Meek rules in a single iteration over the vertices
of the graph in order of a topological ordering of $D$ yields the maximal
orientation~\citep{WBL2021}. While this procedure appears quite involved, we
demonstrate that the extension step is actually the
most expensive part and the subsequent steps are comparably cheap.
Moreover, we show that this strategy is significantly faster than applying the
Meek rules directly. From a theoretical point of view, these approaches
guarantee an $O(n^3)$ worst-case run time, which is a significant improvement as
well.

%\begin{figure}
%  \centering
%  \begin{tikzpicture}
%    \node (pdag)  at (0,0) {PDAG};
%    \node (dag)   at (1.5,-2) {DAG};
%    \node (cpdag) at (3,0) {CPDAG};
%    \node (mpdag) at (5.5,0) {MPDAG};
%    \draw[arc] (pdag) to node[midway,below left](){Ext.} (dag);
%    \draw[arc] (pdag) to node[midway,above](){MR} (cpdag);
%    \draw[arc] (dag) to node[midway,right](){$\dagger$} (cpdag);
%    \draw[arc] (3.75,-1) to node[midway,black,below right](){MR (topological order)} (mpdag);
%    \draw[arc, bend left] (pdag) to node[midway,above](){MR} (mpdag);
%    \draw[semithick] (-1,.75) rectangle (3.75,-2.5);
%  \end{tikzpicture}
%
%  \caption{The different ways of obtaining the maximal orientation of a PDAG in form of
%    a CPDAG or MPDAG (MR: Meek rules, $\dagger$: \citealp{Chickering1995}).}
%  \label{fig:mo:routes}
%\end{figure}
%
%An overview of the different ways towards the maximal orientation is
%illustrated in Fig.~\ref{fig:mo:routes}.

\begin{figure}
  \centering
  \resizebox{\textwidth}{!}{\input{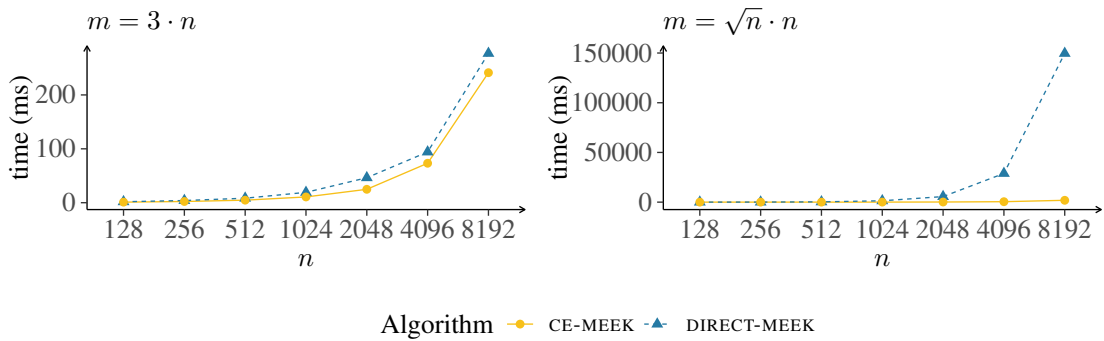}}
  \caption{Run times of the algorithms \textsc{direct-meek} and \textsc{ce-meek}
    on randomly generated PDAGs with $m = 3 \cdot n$ edges (left) and
    $m = \sqrt n \cdot n$ edges (right).}
  \label{fig:results-mo-pdag-er}
\end{figure}

\subsection{Experimental Comparison with the Direct Application of the Meek Rules} \label{sec:appl-meek-eval}
In this section, we compare the two approaches to maximally orient a given PDAG
in an experimental evaluation. More specifically, we compare the direct
application of Meek's rules (\textsc{direct-meek}) to the above mentioned approach originally
introduced by \citet{Chickering1995} and generalized to arbitrary PDAGs by
\citet{WBL2021}, which utilizes the topological ordering of a consistent
extension (\textsc{ce-meek}). The input PDAGs are the same as in
Section~\ref{sec:eval}, and further results for scale-free PDAGs are provided
in Appendix~\ref{appendix:further-eval-mo}.

The run times of the two approaches on random PDAGs are presented in
Fig.~\ref{fig:results-mo-pdag-er}.
Not surprisingly, \textsc{ce-meek} outperforms \textsc{direct-meek} on
every input graph.
While the difference in their run times is relatively small
on sparse input graphs (left), it increases drastically as $n$ grows
on denser input graphs (right)%
\footnote{We use \textsc{dtic} to compute the
  consistent extension. For sparse graphs, the use
  of \textsc{wbl} can give a significant speedup,
  which would in turn translate to \textsc{ce-meek}.}.
\begin{figure}
  \centering
  \resizebox{\textwidth}{!}{\input{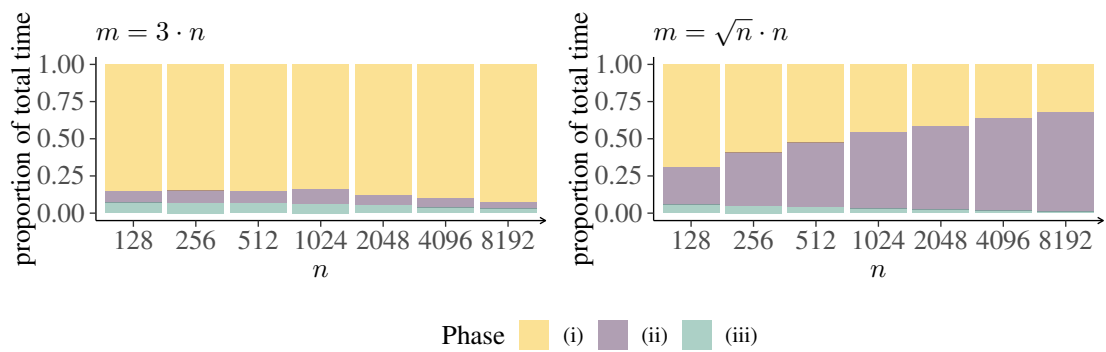}}
  \caption{Proportions of the total run time of \textsc{ce-meek}
    for the three phases on randomly generated PDAGs with
    $m = 3 \cdot n$ edges (left) and $m = \sqrt n \cdot n$ edges (right).}
  \label{fig:results-mo-perc-pdag-er}
\end{figure}
In addition to the plain comparison of run times, we also analyze the
time spent by \textsc{ce-meek} on the different phases of the algorithm.
More specifically, we measure the time for (i) computing a consistent
extension, (ii) finding the corresponding CPDAG, and (iii) applying the
Meek rules in a single iteration over the vertices.
The results for the same input graphs as before are depicted in
Fig.~\ref{fig:results-mo-perc-pdag-er}.
Each bar is divided into the proportions of the total run time for the
three phases, i.\,e., adding together the proportions of the three phases
equals 100 percent of the run time.
For sparse graphs (left), the majority of the time (more than 75
percent) is spent on phase~(i), demonstrating that in
order to obtain fast algorithms to compute maximal orientations, it is
crucial to have algorithms solving the extension problem efficiently.
Even though the proportion of phase~(ii) dominates for dense graphs (right)
with many vertices, phase~(i) still accounts for a significant portion
(at least 25 percent) of the total run time.

\section{Conclusions} \label{sec:conclusion}
In this paper, we demonstrate the effectiveness of utilizing consistent
extensions for the task of maximally orienting a PDAG. We started by
revisiting the extension problem, presenting two new approaches to efficiently compute
consistent DAG extensions in practical applications.
The first approach (\textsc{dth}) refines the widespread Dor-Tarsi algorithm by
the employment of a simple heuristic, which reduces the computational
effort by prioritizing low-cost vertices throughout its iterations.
The second approach (\textsc{dtic}) stores additional information between iterations
to avoid duplicate iterations and thereby matches the worst-case
complexity of the \textsc{wbl} algorithm, which achieves the conditional
lower bound of $O(n^3)$ for the extension problem.
In a practical evaluation, we show that \textsc{dtic} exhibits the most stable
performance overall, combining the advantages of the other approaches.
%By also applying the heuristic from \textsc{dth}, \textsc{dtic} combines the
%advantages of (i) not relying on a costly initialization step, (ii) allowing
%for the early cancellation of loops, and (iii) having worst-case complexity
%$O(n^3)$. 
Based on those insights and results, we highlight an important
application -- the maximal orientation of PDAGs, a procedure ubiquitous in causal
discovery. We demonstrate experimentally that utilizing consistent extensions yields
highly reliable and effective algorithms for this task, which outperform the
direct use of the Meek rules currently used most commonly.

\acks{The research of Malte Luttermann was partly supported by the Medical
Cause and Effects Analysis (MCEA) project.}

\bibliography{main.bib}

\appendix

\section{Comparison of Graph Representations} \label{appendix:graph-repr}
Choosing an appropriate graph representation
is crucial to obtain effective extension algorithms.
In particular, two important operations that need to be executed fast to
solve the extension problem efficiently are adjacency tests and the
removal of vertices and edges from the graph.
In this section, we compare two different graph representations to support
carrying out these operations efficiently.
The first representation utilizes a hashed data structure to represent adjacencies
in a graph and a the second one employs a combination of linked lists and an
adjacency matrix for this purpose.

The graph representation used throughout the course of this paper keeps
track of three sets storing the neighbors (ingoing, outgoing, and undirected)
for each vertex in the graph.
To access elements in these sets in expected constant time, hashing is applied.
Representing the neighbors of every vertex with the help of hash sets leads
to a simple, yet practical graph representation.
However, iterating over hash sets (e.\,g., during the potential-sink check)
is suboptimal, as the entire hash table has to be iterated over.
Furthermore, as outlined in Section~\ref{sec:dtic}, a representation based on
linked lists allows \textsc{dtic} to exit its loops earlier,
that is, as soon as a neighbor violating the potential-sink property is
found, which is not possible if neighbors are stored in a hashed data structure.

To analyze the impact of these aspects on the performance of \textsc{dtic}
and the other algorithms, we also implemented a graph representation using an
adjacency matrix and linked lists to store adjacencies in the graph, which was
proposed by~\citep{WBL2021}.
More precisely, there are three linked lists to store the ingoing, outgoing,
and undirected neighbors, respectively, for each vertex. The adjacency
matrix contains a pointer to the corresponding linked list entry for every
edge in the graph.
Adjacency tests run in constant time on the adjacency matrix and the removal
of vertices from the graph runs in constant time as well, as the
corresponding pointer in the matrix enables an access to any neighbor in the
linked list in $O(1)$. 
The main drawback of this approach is clearly the large
memory requirement of the $O(n^2)$ adjacency matrix. Allocating the
corresponding memory is also time-consuming.

We found that the usage of linked lists instead of hash sets does not
provide significant improvements for the run times of the extension algorithms.
In most of the evaluated scenarios, the contrary is the case.
A direct comparison of implementations using hash sets and linked lists can be found in
Table~\ref{table:hashing-vs-ll-pdag-er} where the input graphs and
settings are the same as in Section~\ref{sec:eval}.

\begin{table}[t]
  \centering
  \begin{tabular}{l|llllllll} \toprule
    $n$ & \textsc{dt} & \textsc{dt-ll} & \textsc{dth} & \textsc{dth-ll} & \textsc{dtic} & \textsc{dtic-ll} & \textsc{wbl} & \textsc{wbl-ll} \\ \midrule
    $128$  & 0.83   & 0.14   & 0.88   & 0.19   & 0.94   & 0.20   & 1.29   & 0.26   \\
    $256$  & 1.80   & 0.41   & 1.89   & 0.54   & 1.97   & 0.58   & 2.60   & 0.68   \\
    $512$  & 3.89   & 1.60   & 4.07   & 1.88   & 4.13   & 2.03   & 5.21   & 2.15   \\
    $1024$ & 9.96   & 6.76   & 10.19  & 7.62   & 9.72   & 8.28   & 10.88  & 9.53   \\
    $2048$ & 27.01  & 39.55  & 25.66  & 40.40  & 23.32  & 42.86  & 23.35  & 50.90  \\
    $4096$ & 87.13  & 149.06 & 85.38  & 190.61 & 68.50  & 201.69 & 51.98  & 236.52 \\
    $8192$ & 370.79 & 606.02 & 323.64 & 608.67 & 222.52 & 654.45 & 116.92 & 871.23 \\ \bottomrule
  \end{tabular}
  \hfill\\[0.5cm]
  \begin{tabular}{l|llllllll} \toprule
    $n$ & \textsc{dt} & \textsc{dt-ll} & \textsc{dth} & \textsc{dth-ll} & \textsc{dtic} & \textsc{dtic-ll} & \textsc{wbl} & \textsc{wbl-ll} \\ \midrule
    $128$  & 0.87   & 0.19   & 0.97   & 0.28   & 1.04   & 0.29   & 2.35   & 0.46    \\
    $256$  & 1.86   & 0.55   & 2.11   & 0.81   & 2.23   & 0.84   & 5.61   & 1.21    \\
    $512$  & 3.92   & 1.97   & 4.71   & 2.86   & 4.93   & 2.98   & 13.09  & 4.30    \\
    $1024$ & 9.62   & 9.14   & 11.90  & 11.68  & 12.14  & 12.08  & 30.98  & 15.93   \\
    $2048$ & 23.60  & 48.02  & 30.77  & 54.55  & 31.38  & 56.30  & 75.68  & 72.03   \\
    $4096$ & 72.80  & 175.78 & 93.31  & 236.88 & 96.00  & 244.46 & 209.80 & 323.04  \\
    $8192$ & 269.53 & 685.58 & 333.44 & 752.79 & 336.09 & 785.27 & 594.51 & 1159.83 \\ \bottomrule
  \end{tabular}
  \caption{Comparison of the algorithms \textsc{dt}, \textsc{dth},
    \textsc{dtic}, and \textsc{wbl} with a linked lists implementation (\textsc{-ll})
    on random PDAGs with $m = 3 \cdot n$ edges (above) and $m = \log_2(n) \cdot n$
    edges (below).}
  \label{table:hashing-vs-ll-pdag-er}
\end{table}

Clearly, the usage of linked lists appears to be at a disadvantage for the
instances considered in this work.
However, we remark that, in some cases, the usage of linked lists is
beneficial. For example, on chordal graphs (see
Appendix~\ref{appendix:further-eval-ext} for more details),
\textsc{dt} is faster on graphs with a linked list implementation
compared to graphs using a hashed data structure.
In Appendix~\ref{appendix:further-eval-ext}, we also introduce scale-free PDAGs, for which
the comparison of hash sets and linked lists yields the same relations
as in Table~\ref{table:hashing-vs-ll-pdag-er}.
In conclusion, our experiments demonstrate that the linked list
representation introduces overhead, which does not pay off for the generally
sparse graphs considered in this work and usually occurring in practice.

\section{Further Experimental Results for Extendability} \label{appendix:further-eval-ext}
To complement the experimental results for randomly generated PDAGs
presented in Section~\ref{sec:eval}, we evaluate the algorithms
\textsc{dt}, \textsc{dth}, \textsc{dtic}, and \textsc{wbl} on scale-free
PDAGs and chordal graphs.
Scale-free graphs are graphs whose degree distribution is a
power law distribution, i.\,e., there are few vertices with a high degree
while most of the vertices have a rather small degree.
The scale-free PDAGs are generated in a similar fashion as the graphs
from Section~\ref{sec:eval}, that is, the general procedure of
(i) creating a random DAG $D$, (ii) replacing all directed edges not
participating in a v-structure by an undirected edge, and (iii) orienting
between two and five (randomly chosen) undirected edges according to $D$
stays the same but the initial DAG $D$ is generated in a different way.
More precisely, the initial DAG is generated by first creating a random
undirected scale-free graph using the Barabási-Albert
model~\citep{BarabasiAlbert2002} and afterward constituting a
random permutation of its vertices as a topological ordering according
to which the edges are then directed.
All parameter choices are identical to those in Section~\ref{sec:eval}.

To generate the chordal graphs, we apply the random subtree intersection
method introduced by \citet{Seker2017}.
Chordal graphs provide an interesting addition to PDAGs as they are fully
undirected and extendable by definition, thus demanding quite some
computational effort to extend the graph.
We set $n = 128, 256, \dots, 8192$ again and use $k = 3, 5, \log_2(n), \sqrt n$
as a parameter for the random subtree intersection method.
The parameter $k$ determines the average size of the random subtrees used
to generate the chordal graph and thus controls the number of edges in the graph.
However, the exact number of edges slightly differs between instances.

\begin{figure}[t]
  \centering
  \resizebox{\textwidth}{!}{\input{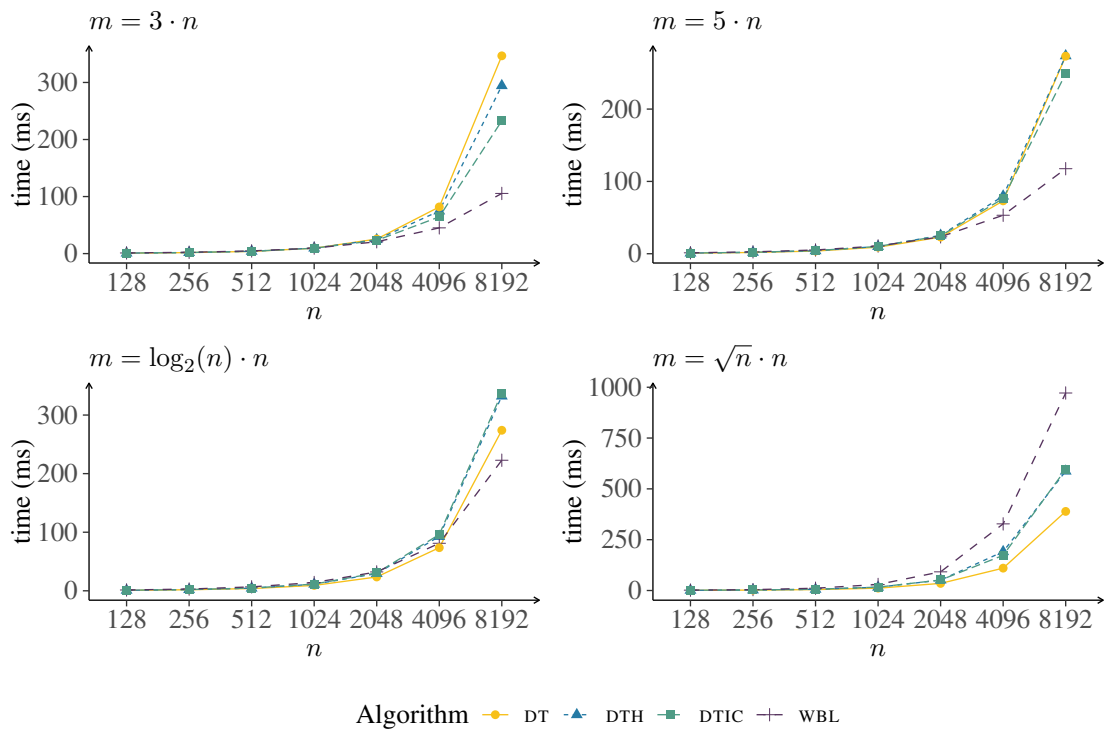}}
  \caption{Run times of the algorithms \textsc{dt}, \textsc{dth},
    \textsc{dtic}, and \textsc{wbl} on randomly generated scale-free PDAGs 
     of $n$ vertices and $m$ edges,  
    with $m = 3 \cdot n$  (top left), $m = 5 \cdot n$  (top right),
    $m = \log_2(n) \cdot n$  (bottom left), and $m = \sqrt n \cdot n$
    (bottom right).}
    \label{fig:results-ext-pdag-ba}
\end{figure}

The results for scale-free PDAGs can be found in
Fig.~\ref{fig:results-ext-pdag-ba} where we observe the same patterns as in
Fig.~\ref{fig:results-ext-pdag-er}, i.\,e., \textsc{wbl} is the fastest for
sparser graphs ($m = 3 \cdot n, 5 \cdot n, \log_2(n) \cdot n$) but is the
slowest on denser graphs ($m = \sqrt n \cdot n$) while the opposite holds
for \textsc{dt} (that is, \textsc{dt} is the slowest on sparse graphs with
$m = 3 \cdot n$ and the fastest on dense graphs with $m = \sqrt n \cdot n$).
In total, \textsc{dtic} yields a stable performance again, showing that
\textsc{dtic} retains the practicality of \textsc{dt} while maintaining
the theoretical bounds on the run time complexity of \textsc{wbl}.

\begin{figure}[t]
  \centering
  \resizebox{\textwidth}{!}{\input{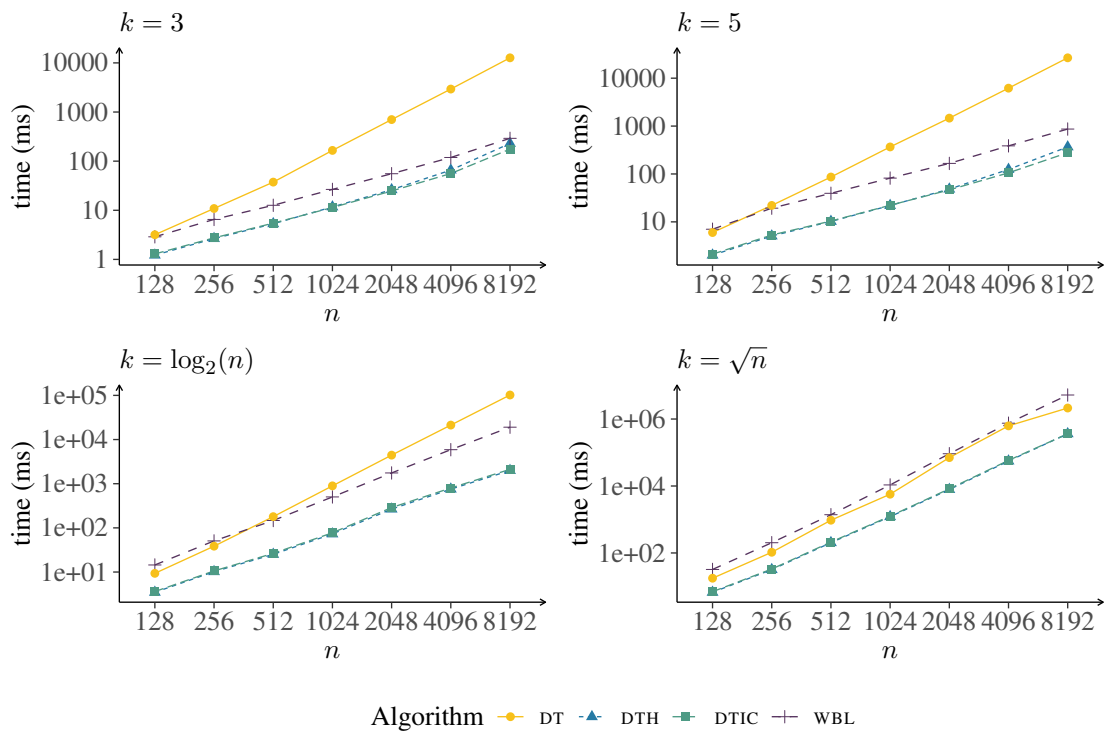}}
  \caption{Run times of the algorithms \textsc{dt}, \textsc{dth},
    \textsc{dtic}, and \textsc{wbl} on randomly generated chordal graphs
    with $k = 3$ (top left), $k = 5$ (top right), $k = \log_2(n)$
    (bottom left), and $k = \sqrt n$ (bottom right) with all y-axes
    being log-scaled.}
  \label{fig:results-ext-chordal}
\end{figure}

Further, Fig.~\ref{fig:results-ext-chordal} displays the run times of
\textsc{dt}, \textsc{dth}, \textsc{dtic}, and \textsc{wbl} on chordal graphs.
Note that the y-axes are log-scaled in all plots.
\textsc{dt} is inferior to all other algorithms sparse graphs (top left and
top right) and also becoming the slowest on slightly denser graphs (bottom
left).
Eventually, the run time of \textsc{dt} becomes almost 100 times
slower than the run times of the other algorithms, for example at $n = 8192$
and $k = 3$ (among others).
\textsc{wbl} handles sparse graphs (top left and top right) roughly as good
as \textsc{dth} and \textsc{dtic} but cannot keep pace for the denser graphs
shown at the bottom plots.
The results indicate that \textsc{dth} and \textsc{dtic} yield the best
performance on graphs where more effort is necessary to compute a consistent
extension.

\section{Further Experimental Results for Maximal Orientations} \label{appendix:further-eval-mo}
For the sake of completeness, we also give further experimental results
for the algorithms \textsc{direct-meek} and \textsc{ce-meek} in addition
to the results presented in Section~\ref{sec:appl-meek-eval}.
We report the results for scale-free PDAGs which are identical to those
from the previous section and add more edge densities to the evaluated
scenarios from Section~\ref{sec:appl-meek-eval}\footnote{Chordal graphs are
obviously not interesting in the setting of maximal orientations as they
contain only undirected edges and hence no Meek rule is applicable on them.}.

\begin{figure}[t]
  \centering
  \resizebox{\textwidth}{!}{\input{plots/results-mo-pdag-er-more-m.tex}}
  \caption{Run times of the algorithms \textsc{direct-meek} and \textsc{ce-meek}
    on randomly generated PDAGs with $m = 5 \cdot n$ edges (left) and
    $m = \log_2(n) \cdot n$ edges (right).}
  \label{fig:results-mo-pdag-er-more-m}
\end{figure}

\begin{figure}[t]
  \centering
  \resizebox{\textwidth}{!}{\input{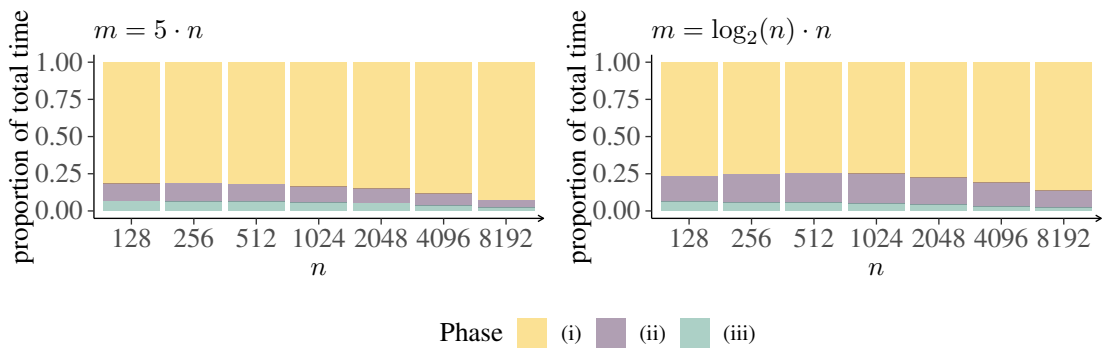}}
  \caption{Proportions of the total run time of \textsc{ce-meek}
    for the three phases on randomly generated PDAGs with
    $m = 5 \cdot n$ edges (left) and $m = \log_2(n) \cdot n$ edges (right).}
  \label{fig:results-mo-perc-pdag-er-more-m}
\end{figure}

Fig.~\ref{fig:results-mo-pdag-er-more-m} depicts the run times of
\textsc{direct-meek} and \textsc{ce-meek} on randomly generated PDAGs
containing $m = 5 \cdot n$ and $m = \log_2(n) \cdot n$ edges.
These PDAGs are generated the same way as the PDAGs in
Fig.~\ref{fig:results-mo-pdag-er}.
The run times pictured in Fig.~\ref{fig:results-mo-pdag-er-more-m} show the
same pattern as in Fig.~\ref{fig:results-mo-pdag-er}, namely that
\textsc{ce-meek} is faster than \textsc{direct-meek} in every scenario and
the advantage of \textsc{ce-meek} increases with an increasing number of
vertices and edges in the input graph, verifying that the use of consistent
extensions for the computation of maximal orientations is highly beneficial.

The time spent by \textsc{ce-meek} on the different phases of the algorithm
for PDAGs with $m = 5 \cdot n$ and $m = \log_2(n) \cdot n$ edges is plotted in
Fig.~\ref{fig:results-mo-perc-pdag-er-more-m}.
While the left plot is similar to the left plot from
Fig.~\ref{fig:results-mo-perc-pdag-er}, the right plot in
Fig.~\ref{fig:results-mo-perc-pdag-er-more-m} exhibits a greater proportion
of the total run time needed for phase~(i) (that is, computing a consistent
extension) than the right plot in Fig.~\ref{fig:results-mo-perc-pdag-er},
showing that phase~(i) is dominant for sparser graphs and phase~(ii) requires
the majority of the total run time for dense graphs ($m = \sqrt n \cdot n$,
right plot in Fig.~\ref{fig:results-mo-perc-pdag-er}).

\begin{figure}[t]
  \centering
  \resizebox{\textwidth}{!}{\input{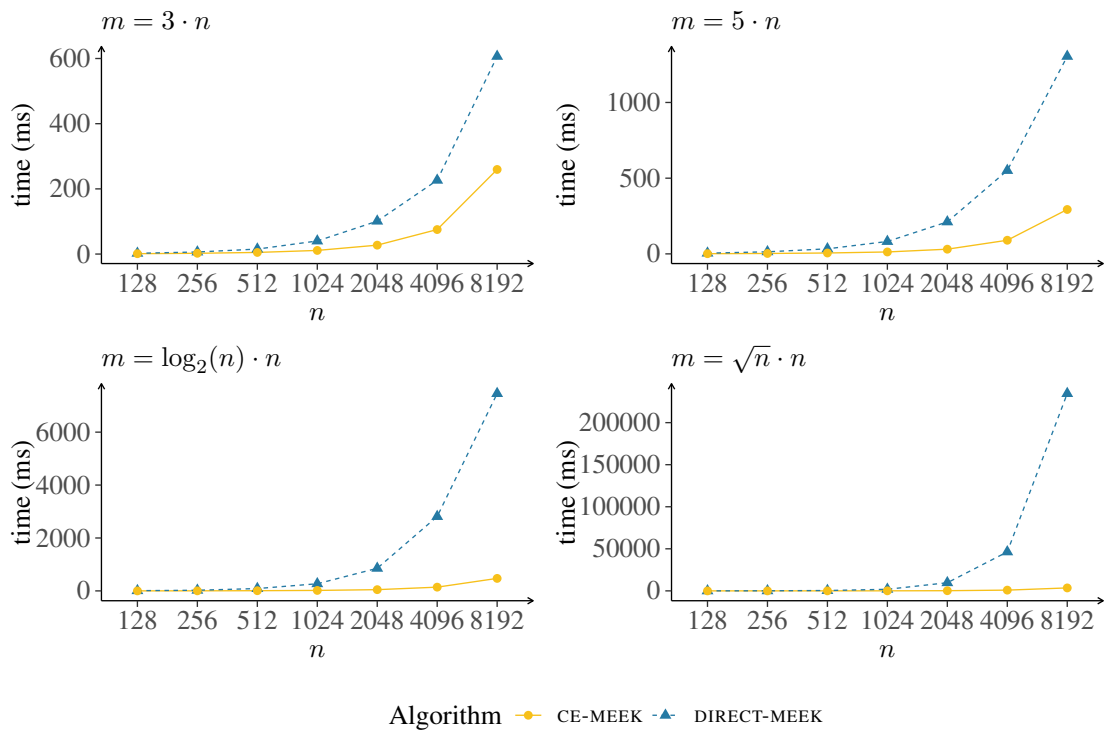}}
  \caption{Run times of the algorithms \textsc{direct-meek} and \textsc{ce-meek}
    on randomly generated scale-free PDAGs 
    of $n$ vertices and $m$ edges,  
    with $m = 3 \cdot n$  (top left), $m = 5 \cdot n$  (top right),
    $m = \log_2(n) \cdot n$  (bottom left), and $m = \sqrt n \cdot n$
    (bottom right).}
  \label{fig:results-mo-pdag-ba}
\end{figure}

Fig.~\ref{fig:results-mo-pdag-ba} presents the run times of \textsc{direct-meek}
and \textsc{ce-meek} on scale-free PDAGs.
As in Fig.~\ref{fig:results-mo-pdag-er} and
Fig.~\ref{fig:results-mo-pdag-er-more-m}, \textsc{ce-meek} is superior to
\textsc{direct-meek} on all input graphs.
Overall, handling scale-free PDAGs takes more computational effort compared
to handling PDAGs having their edges distributed at random.

\begin{figure}[t]
  \centering
  \resizebox{\textwidth}{!}{\input{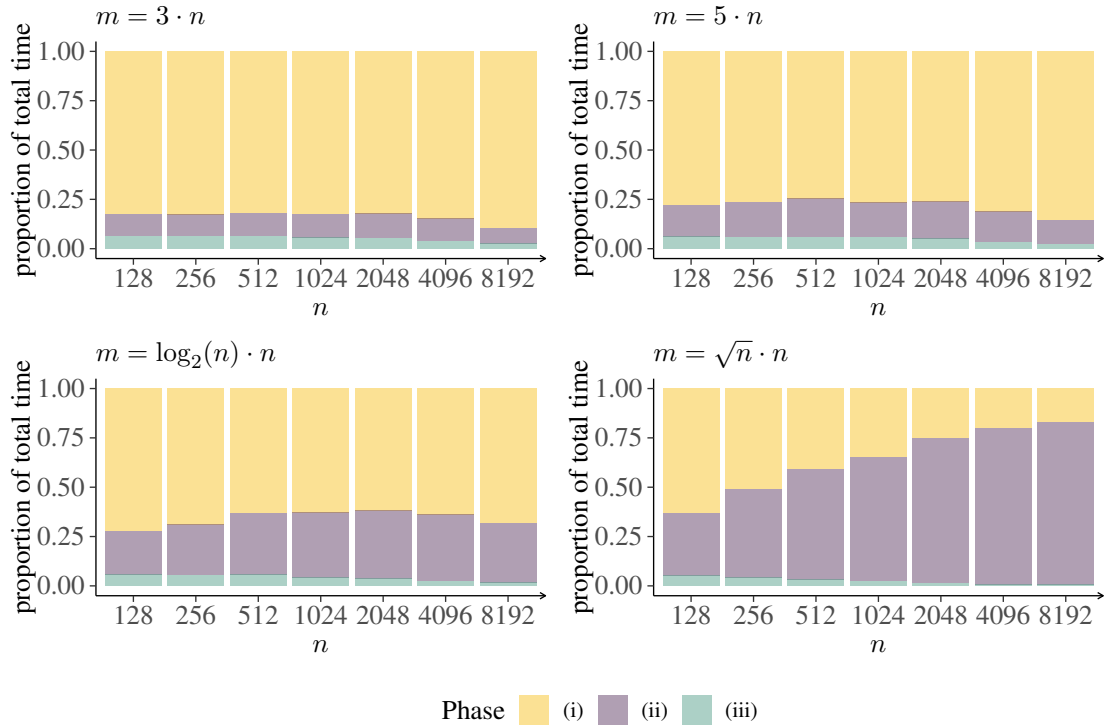}}
  \caption{Proportions of the total run time of \textsc{ce-meek}
    for the three phases on randomly generated scale-free PDAGs 
    of $n$ vertices and $m$ edges,  
    with $m = 3 \cdot n$  (top left), $m = 5 \cdot n$  (top right),
    $m = \log_2(n) \cdot n$  (bottom left), and $m = \sqrt n \cdot n$
    (bottom right).}
  \label{fig:results-mo-perc-pdag-ba}
\end{figure}

Furthermore, a visualization of the time spent by \textsc{ce-meek} on the
different phases of the algorithm for scale-free input PDAGs is given in
Fig.~\ref{fig:results-mo-perc-pdag-ba}.
The plots for $m = 3 \cdot n$ (top left) and $m = \sqrt n \cdot n$ (bottom right)
are similar to the plots in Fig.~\ref{fig:results-mo-perc-pdag-er} and the plots
for $m = 5 \cdot n$ (top right) and $m = \log_2(n) \cdot n$ (bottom left) are
similar to the plots in Fig.~\ref{fig:results-mo-perc-pdag-er-more-m}.
Despite the similarities, we observe greater proportions of phase~(ii) (i.\,e.,
finding the corresponding CPDAG to the DAG computed in the first phase) in
Fig.~\ref{fig:results-mo-perc-pdag-ba} than in Fig.~\ref{fig:results-mo-perc-pdag-er}
and Fig.~\ref{fig:results-mo-perc-pdag-er-more-m}, showing that phase~(ii)
requires more effort on scale-free PDAGs than on PDAGs having their edges
distributed at random.
It becomes evident again that higher graph densities increase the proportion of
phase~(ii) of the total run time.
\end{document}